\documentclass[letterpaper, 10 pt, conference]{ieeeconf}  % Comment this line out if you need a4paper

\usepackage{cite}
\usepackage{amsfonts}
\usepackage{amsmath}
\usepackage{graphicx}
\usepackage{booktabs}
\usepackage{multirow}
\usepackage{xcolor}
\usepackage{pifont}
\usepackage{wrapfig}
\usepackage{stfloats}
\usepackage{url} % for supplementary-material link

\newtheorem{theorem}{Theorem}

\newcommand{\R}{\mathbb{R}}
\newcommand{\nx}{{n_x}}
\newcommand{\nuu}{{n_u}}
\newcommand{\notationfix}[1]{{\color{black}#1}} % Change blue here to recolor notation fixes.
\newcommand{\highvaluefix}[1]{{\color{black}#1}} % Change magenta here to recolor high-value notation fixes.
\newcommand{\logicfix}[1]{{\color{black}#1}} % Change orange here to recolor prose/logic fixes.

\providecommand{\MethodOurs}{KCL}
\providecommand{\BaseOne}{K-VIL}
\providecommand{\BaseTwo}{MBIRL}
\providecommand{\BaseThree}{ReKep}

\providecommand{\NA}{N/A}

\IEEEoverridecommandlockouts                              % This command is only needed if 
\title{\LARGE \bf
Learning Keypoint Constraints from Visual Demonstrations \\ for Safe Motion Planning
}

\author{
  Zheng Qiu\textsuperscript{*}, Zhouyu Zhang\textsuperscript{*}, Ninghan Zhong, and Glen Chou
}

\begin{document}

\maketitle
\thispagestyle{empty}
\pagestyle{empty}

%%%%%%%%%%%%%%%%%%%%%%%%%%%%%%%%%%%%%%%%%%%%%%%%%%%%%%%%%%%%%%%%%%%%%%%%%%%%%%%%
\begin{abstract}

We present Keypoint Constraint Learning (KCL), a framework for learning task-relevant motion constraints from visual demonstrations and for planning safely with the extracted constraints. To make constraint learning tractable from RGB-D observations, we represent demonstrations using sparse 3D object-centric keypoints and semantic mask centroids extracted through segmentation, visual feature clustering, and point tracking. In this reduced geometric space, we learn keypoint dynamics and infer distance- and orientation-based constraints that make the expert trajectories feasible and approximately locally optimal by enforcing KKT optimality conditions in the reduced state space. We then plan with the learned constraints under uncertainty, using robustness margins derived from accumulated prediction error, and provide a theoretical analysis of how keypoint perception error and learned dynamics error affect the recovered constraints. We evaluate KCL on visual demonstrations from navigation to rigid/deformable manipulation, showing that it recovers multi-stage geometric constraints and outperforms baselines in downstream safe planning.

\end{abstract}

\begin{figure*}[!b]
    \vspace{-12pt}\centering
    \includegraphics[width=\linewidth]{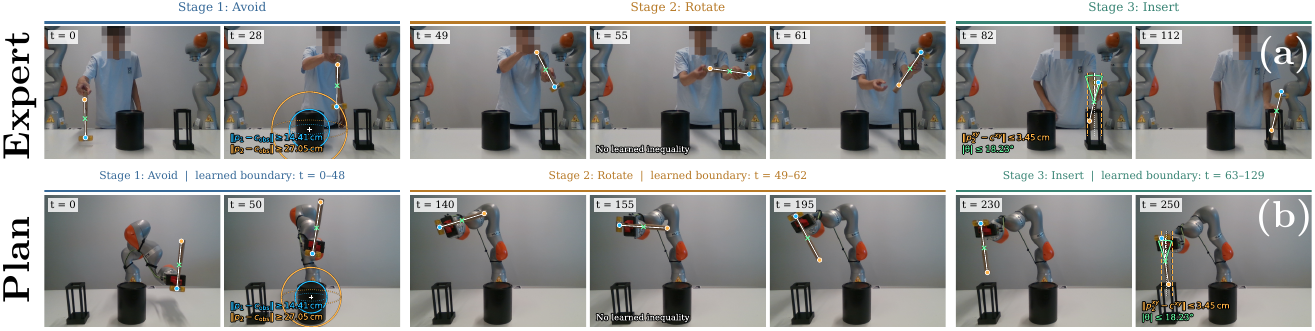}\vspace{-5pt}
    \caption{\looseness-1{\textbf{(a)} Keypoint constraints learned from a video of a human reorienting and inserting a hammer, a multi-stage task where low-level distance/orientation constraints between objects are learned together with the task segmentation (Stages 1-3). \textbf{(b)} KCL plans a safe trajectory conforming with the constraints.}\vspace{-10pt}}
    \label{fig:hammer}
\end{figure*}

\vspace{-4pt}
\section{Introduction}
\label{sec:introduction}
\vspace{-2pt}
\looseness-1Robotic behaviors depend on objectives and safety constraints that are often difficult for end users to specify. 
Constrained inverse optimal control (CIOC) methods can learn constraints and objectives from locally optimal human demonstrations \cite{ral22}, enabling safe downstream planning and control. 
Despite success in navigation and obstacle avoidance, most CIOC methods require labeled state-action demonstrations and analytical dynamics. This limitation is particularly important when accurate state estimates or representations are unavailable, as in deformable object manipulation. Video demonstrations offer a practical alternative with rich visual data and without heavy instrumentation, but raw images are high-dimensional and difficult to scale for control. Moreover, CIOC requires generalizable constraints and objectives that are difficult to define in pixel space. \logicfix{This creates a gap between vision and CIOC, motivating a visual representation that is structured enough for constraint learning.}

\looseness-1To close this gap, we use keypoints as an intermediate representation between images and CIOC. Keypoints provide a low-dimensional object-centric scene description that preserves constraint-relevant environment geometry, unlike unstructured latent representations~\cite{manuelli2019kpam}. \logicfix{Their physical interpretability simplifies nominal keypoint-space objectives, such as minimizing displacement, which regularize CIOC while allowing constraints to be inferred solely from safe expert demonstrations.} 
We propose KCL, a framework for \underline{K}eypoint \underline{C}onstraint \underline{L}earning from visual demonstrations. KCL extracts and tracks keypoints and semantic masks from paired RGB-D image-control demonstrations, and lifts them to the 3D world frame via depth maps. 
We represent the state using keypoints and mask centroids, learn dynamics in this visual space, and infer constraints from these trajectories. Assuming demonstrations noisily optimize a task objective while satisfying constraints, we enforce the Karush-Kuhn-Tucker (KKT) conditions to recover consistent constraints. 
% We then represent the system state using both keypoints and mask centroids, learn dynamics in this visual state space, and formulate constraint inference as an inverse problem over the resulting trajectories using the learned keypoint dynamics. Building on the observation that expert demonstrations should approximately optimize some task objective while obeying constraints, we enforce the Karush-Kuhn-Tucker (KKT) conditions to learn constraints that reflect the optimality of the observed demonstrations.
We analyze learning error from imperfect keypoints and plan robustly using margins derived from accumulated prediction error. 
Our contributions are: (i) a keypoint--mask representation for visual
constraint learning; (ii) KKT-based inference of geometric constraints with
learned dynamics and sensitivity bounds; (iii) a robust feedback planner; and
(iv) simulation and hardware validation on navigation and multi-stage
rigid/deformable manipulation, outperforming baselines in safety and success.
% The contributions of this work are threefold. First, we formulate constraint learning from visual demonstrations using keypoint and semantic-mask trajectories as the inference representation.%, with particular motivation from deformable-object manipulation. 
% Second, building on existing keypoint extraction and tracking methods, we develop a KKT-inspired framework for inferring task-relevant geometric constraints in the resulting visual state space. Third, we implement an optimization-based controller guided by the learned keypoint constraints and validate the full pipeline on constrained robotic demonstration tasks, including deformable-object scenarios.
%===============================================================================

\vspace{-5pt}
\section{Related Work}\label{sec:related_work}
\vspace{-2pt}

\noindent\textbf{Keypoints for Robotics. }
Keypoints are widely used as compact interfaces between perception and control ~\cite{kulkarni2019unsupervised}, enabling semantic 3D goal specification via geometric costs and constraints~\cite{manuelli2019kpam}, especially for deformable objects ~\cite{lips2024cloth} and multi-stage manipulation~\cite{huang2024rekep}.
To power keypoint-based control, many methods use VLMs to propose keypoints or spatial constraints from language and vision~\cite{kumar2026open,pan2025omnimanip,lin2025keypoint}. In contrast, we infer constraints from visual demonstrations via inverse optimality, without a retrieval database, language specification, or VLM-proposed test-time constraints, reducing hallucination and prompt errors while grounding constraints in expert behavior. VLM-free alternatives infer geometric equality constraints from demonstration variance~\cite{gao2023k}, but require manual skill and keypoint annotation~\cite{lin2025constraint}, or learn keypoint-space costs for goal reaching~\cite{mbirl} without considering constraints. Our method instead learns interpretable keypoint-mask inequality constraints for safety and task requirements without VLMs or manual keypoint annotations.

\noindent\textbf{Constraint Learning. }
 We build on CIOC, which infers inequality constraints from demonstrations using KKT optimality conditions~\cite{toussaint,ral20,corl,corl20,wafr}. CIOC is data-efficient because it exploits local optimality~\cite{ral20}, but assumes known differentiable dynamics, predefined state representations, and full state-control observations~\cite{ral20,corl20,ral22}. Other geometric constraint-learning methods require fiducial markers or focus on equality constraints~\cite{shah,liu2019learning}. Thus, existing CIOC methods are difficult to apply to visual settings without compact state representations, such as for deformable objects.
Inverse constrained RL (ICRL) is an approach for inferring constraints from demonstrations in constrained MDPs~\cite{malik2021inverse}. Unlike CIOC, it usually assumes simulator access rather than closed-form dynamics and can handle unknown dynamics~\cite{jang2023inverse}. However, existing ICRL methods still assume known, fully-observed states. We instead learn from vision and provide bounds on satisfying learned constraints under imperfect keypoint detection and dynamics.%The closest work, \cite{}, learns neural dynamics from demonstrations and uses KKT-inspired losses to learn neural constraints, but also assumes perfect state observations and does not consider visual demonstrations.

\section{Preliminaries and Problem Statement}
\label{sec:preliminaries}

% We build on two lines of work: keypoint visual representations and KKT-based constraint learning. %Keypoints compactly represent visual demonstrations as low-dimensional geometric trajectories, while KKT-based constraint learning \cite{chou2020learning,chou2022gaussian} infers unknown constraints by treating expert demonstrations as solutions of an underlying constrained optimal control problem.
% In this seection, we introduce two lines of prior work that our method builds on: keypoint-based visual representations and KKT-based constraint learning from demonstrations. Keypoints provide a compact geometric abstraction of image observations, allowing visual demonstrations to be represented as low-dimensional trajectories. KKT-based constraint learning, closely related to inverse optimal control, provides a principled way to infer unknown constraints by assuming that expert demonstrations are approximately optimal for an underlying constrained optimization problem.
% \textcolor{red}{TODO: cite related paper here.}

\noindent\textbf{Keypoints. }
% \label{subsec:prelim_keypoint}
Keypoints \cite{manuelli2019kpam} are sparse geometric features representing meaningful scene locations. They reduce raw pixels to task-relevant spatial information, such as object locations, contact regions, and relative distances. We also use semantic object masks as coarse object-level descriptors. Let $v_t \in \R^{H\times W\times 4}$ be an RGB-D observation. From each frame $v_t$, we extract $N$ keypoints $p_{i,t} \in \R^3$, where each $p_{i,t}$ denotes a world-frame 3D position, and given $M$ masks extracted from $v_t$, we denote their world-frame 3D centroids as $c_{j,t}\in \R^3$. \notationfix{We represent the keypoints $K_t \in \R^{3N}$ and centroids $C_t \in \R^{3M}$ as stacked vectors}
\begin{equation}\label{eq:keypoints}
    \notationfix{K_t:=[p_{1,t}^\top,\cdots,p_{N,t}^\top]^\top,\quad
    C_t:=[c_{1,t}^\top\cdots c_{M,t}^\top]^\top.}
\end{equation}
% \begin{equation}\label{eq:centroids}
%     C_t=\{c_{1,t},\ldots,c_{M,t}\}\in\R^{3M},
%     \quad c_{j,t}\in\mathbb{R}^{3}.
% \end{equation}

% Keypoints \cite{manuelli2019kpam, kulkarni2019unsupervised, newell2016stacked, suwajanakorn2018discovery} are sparse geometric features that represent visually or semantically meaningful locations in a scene. Compared with raw pixels, keypoints provide a much lower-dimensional representation while preserving task-relevant spatial information, such as object locations, contact regions, and relative distances. Let $v_t \in \R^{H\times W\times 4}$ denote an RGB-D visual observation at time $t$. A keypoint representation associates each $H\times W$-pixel frame $v_t$ with a set of $N$ points
% \begin{equation}\label{eq:keypoints}
%     K_t = \{p_{1,t}, \ldots, p_{N,t}\} \in \R^{3N},
%     \qquad p_{i,t} \in \mathbb{R}^{3},
% \end{equation}
%where each $p_{i,t}$ denotes the world-frame 3D position% of a tracked keypoint in the scene. 
%When only image-space detections are available, the corresponding 3D positions can be recovered using depth measurements and camera calibration. 
% \begin{equation}\label{eq:centroids}
%     C_t = \{c_{1,t}, \ldots, c_{M,t}\} \in \R^{3M},
%     \qquad c_{j,t} \in \mathbb{R}^{3}.
% \end{equation}
\notationfix{The stacked vector $P_t := [K_t^\top,C_t^\top]^\top\in\R^{3(N+M)}$ forms a reduced-order geometric state capturing object motion and structure, useful when analytical states are unavailable but visual tracks can capture task-relevant features.} %shape and motion.
% The pair $P_t := (K_t, C_t)$ defines a reduced-order geometric state that captures both local keypoint motion and object-level spatial structure. A visual demonstration can therefore be converted into a keypoint trajectory $\{(K_t, C_t)\}_{t=0}^{T}$. This representation is particularly useful for deformable-object manipulation, where a full analytic state may be difficult to define, but visually tracked points can still describe task-relevant shape and motion.

\noindent\textbf{KKT Constraint Learning. }
% \label{subsec:prelim_constraint_learning} 
% We next review the standard KKT-based view of constraint learning from demonstrations~\cite{chou2020learning,chou2022gaussian}. 
Consider a discrete-time trajectory $\xi := (x_0, \ldots, x_T, u_0, \ldots, u_{T-1})$, 
where $x_t \in \R^\nx$ is the state and $u_t \in \R^\nuu$ the control input. 
We assume each expert demonstration is an approximately locally optimal solution of a ``forward" constrained optimal control problem: 
\begin{equation}\label{eq:prelim_forward_problem}
\textstyle\min_{\xi}\; J(\xi)\quad \mathrm{s.t.}\quad h(\xi)=0,\ \ g_k(\xi)\le 0,\ \ g_\theta(\xi)\le 0.
\end{equation}
% \begin{equation}
% \begin{aligned}
%     \min_{\xi} \quad & J(\xi) \\
%     \mathrm{s.t.} \quad 
%     & h(\xi) = 0, \qquad g_k(\xi) \leq 0, \qquad g_\theta(\xi) \leq 0.
% \end{aligned}
% \label{eq:prelim_forward_problem}
% \end{equation}
$J$ is the demonstrator's cost, $h(\xi)=0$ are known equality constraints (e.g., dynamics, start/goal), and $g_k(\xi) \leq 0$ are known inequality constraints, e.g., joint limits. We denote the unknown constraint family to be learned as $g_\theta(\xi) \leq 0$, parameterized by $\theta$. If a demonstration $\xi^q$ is locally optimal for \eqref{eq:prelim_forward_problem}, then under standard constraint qualifications \cite{boyd2004convex} there exist Lagrange multipliers $\nu^q$, $\lambda_k^q$, and $\lambda_\theta^q$ such that the KKT conditions \eqref{eq:prelim_kkt}, denoted $\textrm{KKT}(\xi^q;\theta,\lambda_k^q,\lambda_\theta^q,\nu^q)$, hold:
\begin{subequations}
\label{eq:prelim_kkt}
\begin{align}
&h(\xi^q)=0,\quad g_k(\xi^q)\leq 0,\quad
g_\theta(\xi^q)\leq 0,
\label{eq:prelim_kkt_primal} \\
&\lambda_k^q\geq 0,\quad \lambda_\theta^q\geq 0,
\label{eq:prelim_kkt_dual}
\\
&\lambda_k^q\odot g_k(\xi^q)=0,\quad
  \lambda_\theta^q\odot g_\theta(\xi^q)=0,
\label{eq:prelim_kkt_comp} \\
&\begin{aligned}
\nabla_{\xi}J(\xi^q)
&+(\lambda_k^q)^\top\nabla_{\xi}g_k(\xi^q)\\
&+(\lambda_\theta^q)^\top\nabla_{\xi}g_\theta(\xi^q)
 +(\nu^q)^\top\nabla_{\xi}h(\xi^q)=0
\end{aligned}
\label{eq:prelim_kkt_stationarity}
\end{align}
\end{subequations}

\noindent Given demonstrations $\{\xi^q\}_{q=1}^{D}$, the learner solves the ``inverse" problem by finding constraints and multipliers that make all demonstrations satisfy the first-order necessary conditions for local optimality via \eqref{eq:prelim_kkt} \cite{ral20}:
\begin{equation}
\begin{aligned}
    \text{find} \quad 
    &\theta,\ \{\lambda_k^q,\lambda_\theta^q,\nu^q\}_{q=1}^{D}\\
    \mathrm{s.t.} \quad 
    &\textrm{KKT}(\xi^q;\theta,\lambda_k^q,\lambda_\theta^q,\nu^q),
    \quad q=1,\ldots,D .
\end{aligned}
\label{eq:prelim_inverse_kkt}
\end{equation}
% When the demonstrations are only approximately locally optimal, stationarity \eqref{eq:prelim_kkt_stationarity} can be relaxed and penalized as residuals. The inverse constraint learning problem \eqref{eq:prelim_inverse_kkt} aims to find constraint parameters which make the observed trajectories feasible and approximately locally optimal \cite{chou2020learning}.% In this paper, we instantiate this inverse optimization framework using visual information via keypoints. %Instead of assuming access to full physical states, we use keypoint and semantic-mask trajectories as the state representation. The unknown constraints are then defined over geometric relations in this representation, and their parameters are learned by enforcing KKT consistency of the demonstrated keypoint--mask trajectories.

\noindent\textbf{Problem Statement. }
\looseness-1We are given $D$ demonstrations
$\mathcal{D}=\{(v^q_{0:T_q},u^q_{0:T_q-1})\}_{q=1}^D$, 
where $v^q_t \in \R^{H\times W\times 4}$ is the RGB-D image at time $t$ and $u^q_t\in\mathbb{R}^{n_u}$ is the control input. 
% From each visual sequence, we extract tracked keypoints $K^q_t$ and semantic mask centroids $C^q_t$, and form the compact visual state
% \[
% P^q_t := 
% \begin{bmatrix}
% K^q_t \\
% C^q_t
% \end{bmatrix}.
% \]
% The resulting state-control trajectory is
% \[
% z^q := \{P^q_0,\ldots,P^q_{T_q},u^q_0,\ldots,u^q_{T_q-1}\}.
% \]
% We assume that the demonstrations are approximately locally-optimal solutions of a constrained optimal control problem in a reduced-order keypoint--mask state space extracted from the visual demonstrations. The task objective is additive across time and parameterized by known trajectory features $\phi(\cdot)$ with unknown weights $\theta_J$, i.e., $J_{\theta_J}(z)=\sum_{t=0}^T\theta_J^\top \phi(P_t)$, 
\notationfix{We assume the demonstrations are (potentially noisily) locally-optimal solutions of a constrained optimal control problem in the reduced keypoint--mask state space extracted from the images. For demonstration $q$, the task objective is
$J_{\theta_J}(P^q_{0:T_q}) := \sum_{t=0}^{T_q-1}\sum_{i=1}^{N}
\theta_{J,i}\|p^q_{i,t+1}-p^q_{i,t}\|_2^2$, with unknown weights $\theta_J$.}
% where $z=\{P_0,\ldots,P_T,u_0,\ldots,u_{T-1}\}$ denotes the keypoint--mask state-control trajectory.
\notationfix{The unknown task constraints belong to a differentiable family $g_{\theta}(P_t,u_t)\le 0$. Within each task stage, the parameter vector $\theta$ is shared across time; for multi-stage tasks, each stage has its own parameter vector, and the stage boundaries may also be inferred. Our goal is to 1) infer $\theta_J$ and the stage-wise constraint parameters $\theta$ such that the demonstrations in $\mathcal D$ are feasible and locally optimal under $J_{\theta_J}$, and 2) plan safely using the learned parameters.}

% \paragraph{Problem Statement}%\label{sec:problem_statement}

% We are given $D$ paired visual and control demonstrations. The visual component are given as RGB images; possibly depth as well. We assume knowledge of a cost function parameterization in the keypoint space, where $J(\xi) = \theta_J^\top \phi(\xi)$ and $\theta_J$ are unknown and to be learned. Constraint parameterization, time-varying.
% Aim to learn cost and constraint parameters.
    
%===============================================================================
\section{Methodology}
\label{sec:methodology}

% Our framework learns constraints from visual demonstrations by converting image observations into a reduced-order geometric state and then applying optimization-based inverse reasoning in that state space. The overall pipeline consists of three components. First, we extract and track task-relevant keypoints and semantic object masks from demonstration videos, producing keypoint--mask trajectories. Second, we formulate constraint learning over this visual state representation by defining candidate geometric constraints between keypoints, mask centroids, and controls, and infer their parameters using KKT-inspired consistency conditions. Third, we use the learned keypoint-space dynamics and constraints to plan new trajectories through constrained optimization. This section describes each component in turn.
% Our framework consists of three components, overviewed in the following. First, we extract and track task-relevant keypoints and semantic object masks from demonstration videos, producing keypoint--mask trajectories (Sec. \ref{subsec:method_visual_representation}). Second, we formulate constraint learning over this visual state representation by defining candidate geometric constraints between keypoints, mask centroids, and controls, and infer their parameters using KKT-inspired consistency conditions. Third, we use the learned keypoint-space dynamics and constraints to plan new trajectories through constrained optimization (Sec. \ref{subsec:method_constraint_learning}).
Our method first extracts keypoints and semantic masks from demonstrations (Sec. \ref{subsec:method_visual_representation}). Next, we learn geometric constraints over keypoints and centroids via the KKT conditions (Sec. \ref{subsec:method_constraint_learning}). Finally, we plan robustly with the learned keypoint dynamics and constraints (Sec. \ref{subsec:method_planning}).

\subsection{Visual Demonstrations to Keypoint--Mask Trajectories}
\label{subsec:method_visual_representation}

% Given visual demonstrations $\mathcal{D}$, we convert each raw sequence into a low-dimensional trajectory for dynamics learning and constraint inference. At time $t$, the reduced geometric state is $P_t := (K_t, C_t)$, where $K_t$ and $C_t$ denote the $N$ keypoints and $M$ semantic mask centroids from \eqref{eq:keypoints} and \eqref{eq:centroids}, capturing local motion and object-level structure. 
\looseness-1Given $\mathcal{D}$, we convert each raw demonstration into a low-dimensional trajectory for constraint inference. \notationfix{At time $t$, the geometric state is the stacked vector $P_t=[K_t^\top,C_t^\top]^\top$ defined in Sec.~\ref{sec:preliminaries}.} %, capturing local motion and object-level structure.
In simulation, masks and initial keypoints are directly available; in the real world, they are not. For real-world demonstrations, we use SAM ~\cite{kirillov2023segment} to extract initial masks 
$\mathcal{M}_0=\{m_0^k\}_{k=1}^{M}$, $m_0^k\in\R^{H\times W}$, from $v_0$. 
These masks restrict keypoints to foreground regions and define semantic pixel-level centroids $\hat C_0 = \{\hat c_0^k\}_{k=1}^{M}$. 
\notationfix{Within each mask, we use DINOv2 features~\cite{oquab2023dinov2} to select keypoints. We cluster the mask-restricted dense DINOv2 features with $k$-means and, for each cluster, select the mask pixel whose feature is closest to the cluster center. The selected pixels form the initial image-space keypoints $\hat K_0$.}
% To obtain $P_t$ for real-world visual demonstrations, we use the Segment Anything Model (SAM) \cite{kirillov2023segment} to obtain object masks from the image observations. These masks restrict keypoint proposal to meaningful foreground regions and provide the semantic regions whose centroids form $C_t$. Within each selected mask, we use DINOv2 visual features ~\cite{oquab2023dinov2} to identify candidate keypoints. Concretely, dense image features are extracted from the frame, grouped according to the corresponding object masks, and clustered to propose a small number of representative points for each region. This produces an initial set of task-relevant keypoints in the first frame of each demonstration.
% After initialization, we track these keypoints through the video using TAPNext++~\cite{jung2026tapnext}, which estimates queried pixel trajectories and visibility over time. Depth observations and camera calibration lift the tracked image-space keypoints $\hat K_t$ to world-frame 3D coordinates $K_t$ via back-projection. The same projection computes 3D mask centroids from segmented regions. For the $q$th demonstration, this yields a reduced geometric state-control trajectory
We then track keypoints over time with TAPNext++~\cite{jung2026tapnext}, which estimates pixel trajectories and visibility. Depth and camera calibration lift tracked image keypoints $\hat K_t$ to world-frame 3D coordinates $K_t$ via back-projection, and similarly compute 3D mask centroids. For the $q$th demonstration, this yields a reduced state-control trajectory $z^q$ for keypoint constraint learning:
% After initialization, we track the selected keypoints through the video using TAPNext++ \cite{jung2026tapnext}, a tracking-any-point model designed to estimate the trajectory of queried pixels across frames of a video, outputting image-space keypoint locations and visibility estimates over time. We use depth observations and camera calibration to lift the tracked image-space keypoints $\hat K_t$ to 3D coordinates by back-projecting each pixel using its depth value and transforming the resulting point into the world frame, yielding the world-frame 3D keypoints $K_t$. We apply the same projection procedure to compute 3D mask centroids from the corresponding segmented regions. The result of this stage is a geometric reduced state-control trajectory
\begin{equation}\label{eq:keypoint_demo}
\begin{aligned}
z^q &:=\notationfix{{\{(K_t^q,C_t^q,u_t^q)\}}_{t=0}^{T_q-1}
       \cup\{(K_{T_q}^q,C_{T_q}^q)\}}\\
    &=\notationfix{\{P^q_0,\ldots,P^q_{T_q},u^q_0,\ldots,u^q_{T_q-1}\}}.
\end{aligned}
\end{equation}
% where $u_t$ is the control input when it is available. This trajectory
% and serves as the state-control data for keypoint constraint learning in Sec. \ref{subsec:method_constraint_learning}.% Constraints over keypoint-mask  are differentiable with respect to this state and compatible with the KKT-based formulation below.
% This keypoint--mask representation converts high-dimensional RGB-D images into a low-dimensional state for dynamics learning and provides a geometric space for candidate constraints. Constraints can be defined over keypoint-mask distances or keypoint-pair orientations, all differentiable with respect to the visual state and compatible with the KKT-based constraint-learning formulation below.
% This keypoint--mask representation serves two purposes in our framework. First, it converts high-dimensional visual observations into a low-dimensional state suitable for learning dynamics. Second, it provides a geometric space in which candidate constraints can be expressed. For example, constraints can be defined over keypoint--mask distances, keypoint pair orientations, or final-time goal relations. These quantities are differentiable with respect to the visual state, allowing them to be incorporated into the KKT-based constraint-learning formulation introduced below.

\subsection{Keypoint Constraint Learning}
\label{subsec:method_constraint_learning}

Given the keypoint--mask trajectories extracted from visual demonstrations, we formulate constraint learning as an inverse optimization problem in the learned visual state space. 
% Let
% \begin{equation}
%     P_t :=
%     \begin{bmatrix}
%         K_t \\
%         C_t
%     \end{bmatrix}
%     \in \mathbb{R}^{(N+M)\times 3},
% \end{equation}
% where $K_t=\{p_{1,t},\ldots,p_{N,t}\}$ denotes the tracked 3D keypoints and $C_t=\{c_{1,t},\ldots,c_{M,t}\}$ denotes the semantic mask centroids. We use $P_t$ as the compact visual state. 
Following \eqref{eq:keypoint_demo}, for the $q$-th demonstration, the corresponding visual state-control trajectory is denoted by $z^q$.

\noindent\textbf{Learning keypoint dynamics. }
% The KKT-based inverse problem requires equality constraints that describe feasible state transitions. Since the full physical state and analytic dynamics may be unavailable, especially for deformable objects, we learn a keypoint-space dynamics model from demonstration data. Specifically, we train a neural dynamics model $f_\psi:\R^{3}\times \R^\nuu\rightarrow \R^3$ that predicts a per-point displacement conditioned on the current point position and the control input
\looseness-1The KKT inverse problem requires equality constraints describing feasible state transitions. Since analytic dynamics may be unavailable, especially for deformable objects, we learn keypoint-space dynamics from demonstrations. \notationfix{A neural network is used to learn the state-transition map $f_\psi:\R^{3(N+M)}\times\R^\nuu\rightarrow\R^{3(N+M)}$,}
\notationfix{$P_{t+1}=f_\psi(P_t,u_t):=P_t+\alpha_\psi(P_t,u_t)\odot\Delta^{\mathrm{ctrl}}_\psi(P_t,u_t)+\Delta^{\mathrm{free}}_\psi(P_t)$.}
% \begin{equation}
%     \resizebox{\columnwidth}{!}{$\displaystyle
%     P_{t+1} = f_\psi(P_t,u_t) :=P_t + \alpha_\psi(P_t,u_t)\odot \Delta^{\mathrm{ctrl}}_\psi(P_t,u_t)
%     +
%     \Delta^{\mathrm{free}}_\psi(P_t),%\Delta_\psi(P_t,u_t), 
%     $}
%     \label{eq:keypoint_dynamics}
% \end{equation}
% where $\Delta_\psi$ is an MLP %. The model :
% \begin{equation}
%     f_\psi(P_t,u_t)
%     =
%     P_t + \Delta_\psi(P_t,u_t).
% \end{equation}
with a control-dependent component $\Delta^{\mathrm{ctrl}}_\psi$ and a control-independent component $\Delta^{\mathrm{free}}_\psi$, 
% \begin{equation}
%     \Delta_\psi(P_t,u_t)
%     =
%     \alpha_\psi(P_t,u_t)\odot \Delta^{\mathrm{ctrl}}_\psi(P_t,u_t)
%     +
%     \Delta^{\mathrm{free}}_\psi(P_t),
%     \label{eq:keypoint_dynamics_residual}
% \end{equation}
where $\alpha_\psi$ is a learned gate. This factorization enables capturing both passive visual-state evolution and control-driven motion. 
The dynamics are trained via one-step prediction over demonstration data $\{z^q\}_{q=1}^D$:
\begin{equation}
    \min_{\psi}
    \textstyle\sum_{q=1}^{D}
    \sum_{t=0}^{T_q-1}
    \left\|
        f_\psi(P^q_t,u^q_t) - P^q_{t+1}
    \right\|_1.
    \label{eq:keypoint_dynamics_training}
\end{equation}
% After training, 
$f_\psi$ is treated as an equality constraint in the forward and inverse optimization problems. Its Jacobians with respect to the visual state and control, $A_t := \textstyle\frac{\partial f_\psi}{\partial P_t}(P_t,u_t)$ and $B_t := \frac{\partial f_\psi}{\partial u_t}(P_t,u_t)$, 
are computed by automatic differentiation and used in the stationarity equations of the KKT system \eqref{eq:keypoint_kkt_stationarity}.

\noindent\textbf{Keypoint constraints. }
\looseness-1We set candidate constraint parameterizations as geometric relations on $P_t$. 

\noindent\textit{Distance Constraints: }
\notationfix{We consider the pairwise keypoint-mask distance constraint. For distance candidate $i$, corresponding to keypoint $p_{a_i,t}$ and mask centroid $c_{b_i,t}$, we define}
\begin{equation}
    \notationfix{g^{\mathrm{dist}}_{i,t}(P_t;\rho_i)
    :=
    \rho_i
    -
    \|p_{a_i,t}-c_{b_i,t}\|_2^2
    \leq 0,}
    \label{eq:keypoint_mask_distance_constraint}
\end{equation}
\notationfix{where $\rho_i\geq 0$ is the learned squared distance margin, shared across time within the current task stage.} This family is used to represent obstacle avoidance, object separation, and task-relevant spatial relations.% in both rigid-body and deformable-object demonstrations.
% The primary constraint family used in our experiments is the pairwise keypoint--mask distance constraint. For each keypoint $p_{i,t}$ and mask centroid $c_{j,t}$, we define
% \begin{equation}
%     g^{\mathrm{dist}}_{i,j,t}(P_t;\rho_{i,j,t})
%     :=
%     \rho_{i,j,t}
%     -
%     \|p_{i,t}-c_{j,t}\|_2^2
%     \leq 0,
%     \label{eq:keypoint_mask_distance_constraint}
% \end{equation}
% where $\rho_{i,j,t}\geq 0$ is the learned squared distance margin at time $t$. Equivalently, $\rho_{i,j,t}=\theta_{i,j,t}^2$, where $\theta_{i,j,t}$ is the minimum distance between the corresponding keypoint and semantic region. This candidate family is used to represent obstacle avoidance, object separation, and task-relevant spatial relations in both rigid-body and deformable-object demonstrations.

\noindent\textit{Orientation Constraints: }
\notationfix{We also consider keypoint-pair orientation constraints. For orientation candidate $j$, corresponding to a keypoint pair $(a_j,b_j)$ and a fixed world-frame unit axis $\hat e_j\in\{\hat e_x,\hat e_y,\hat e_z\}$, define
$v_{j,t}:=p_{a_j,t}-p_{b_j,t}$ and
$\phi_j(P_t):=((v_{j,t}^{\top}\hat e_j)/\|v_{j,t}\|_2)^2$. We then impose}
\begin{equation}
\notationfix{
g^{\mathrm{ori}}_{j,t}(P_t;\sigma_j)
:=\sigma_j-\phi_j(P_t)\leq 0,
}
    \label{eq:keypoint_orientation_constraint}
\end{equation}
\notationfix{where $\sigma_j\in[0,1]$ is the learned axis-alignment threshold, shared across time within the current task stage.} 
Let $g_\theta(z)$ denote all unknown candidate constraints stacked over time, with parameters $\theta$ collecting the distance and orientation thresholds. 
% , and any task- or stage-specific parameters. 
We also include known control limit constraints, represented component-wise with known $\bar u_{\ell}^{2}$ as
\begin{equation}
    g^{u}_{\ell,t}(u_t)
    :=
    u_{t,\ell}^{2} - \bar u_{\ell}^{2}
    \leq 0,
    \qquad \ell=1,\ldots,\nuu.
    \label{eq:control_constraint}
\end{equation}

\noindent\textbf{Keypoint forward problem. }
We assume that each demonstration is approximately locally optimal for a constrained optimal control problem in the learned visual state space:
\begin{equation}
\begin{aligned}
\min_{\substack{P_{0:T},\\u_{0:T-1}}}\quad
&J(P_{0:T}):=\textstyle\sum_{t=0}^{T-1}\sum_{i=1}^{N}\theta_{J,i}\cdot\|p_{i,t+1}-p_{i,t}\|_2^2\hspace{-7pt}\\
\mathrm{s.t.}\quad
&P_{t+1}=f_\psi(P_t,u_t),\quad t=0,\ldots,T-1,\\
&P_0=P_{\mathrm{start}},
\quad P_T=P_{\mathrm{goal}},\\
&g_u(u_{0:T-1})\leq0,
\quad g_\theta(P_{0:T})\leq0.
\end{aligned}
\label{eq:keypoint_forward_problem}
\end{equation}
Here, $P_{\mathrm{start}}$ and $P_{\mathrm{goal}}$ encode the task-specified start and goal conditions. \logicfix{We use a weighted keypoint-displacement objective $J(P_{0:T})$ in \eqref{eq:keypoint_forward_problem}, although other task costs can be used. The objective need not fully explain expert intent; it provides a nominal optimality model for inferring constraints. Because keypoints are geometric, penalizing their displacement favors short, low-motion trajectories typical of expert demonstrations.}
% although other task costs can be incorporated. The objective is not intended to fully explain expert intent; rather, it provides a nominal optimality model under which the hidden constraints can be inferred from the demonstrations. The choice in \eqref{eq:keypoint_smoothness_cost} is natural in our setting because the keypoints carry geometric meaning: penalizing their displacement encourages short, smooth object motions natural of an expert demonstrator.

\noindent\textbf{KKT-based inverse problem. }
Following KKT-based constraint learning (Sec. \ref{sec:preliminaries}) \cite{ral20}, we search for constraint parameters $\theta$ and multipliers that make the demonstrations approximately satisfy the first-order optimality conditions of \eqref{eq:keypoint_forward_problem}. For each demonstration $z^q$, \notationfix{define the Lagrangian of the forward problem as}
\begin{equation}
\hspace{-8pt}\begin{aligned}
&\notationfix{\mathcal{L}(z^q;\theta,\theta_J,\lambda_\theta^q,\lambda_u^q,\nu^q)} :=\notationfix{J_{\theta_J}(P^q_{0:T_q})+(\lambda_\theta^q)^\top g_\theta(P^q_{0:T_q})}\hspace{-8pt}\\
&\qquad\qquad \notationfix{
+(\lambda_u^q)^\top g_u(u^q_{0:T_q-1})}\notationfix{+(\nu^q)^\top h_\psi(z^q),}
\end{aligned}
\label{eq:keypoint_lagrangian}
\end{equation}
\logicfix{where $h_\psi(z^q)=0$ stacks the learned dynamics constraints; the demonstrated boundary values are treated as fixed data and are not relaxed.} The KKT conditions are
\begin{subequations}
\label{eq:keypoint_kkt}
\begin{align}
    &\begin{aligned}
    h_\psi(z^q)&=s^q, & \|s^q\|_\infty&\le \epsilon_{\rm dyn},\\
    \notationfix{g_\theta(P^q_{0:T_q})}&\leq0, & \notationfix{g_u(u^q_{0:T_q-1})}&\leq0
    \end{aligned}
    \label{eq:keypoint_kkt_primal}
    \\
    &\begin{aligned}
    \lambda_\theta^q&\geq0, & \lambda_u^q&\geq0,\\
    \notationfix{\lambda_\theta^q\odot g_\theta(P^q_{0:T_q})}&=0, &
    \notationfix{\lambda_u^q\odot g_u(u^q_{0:T_q-1})}&=0
    \end{aligned}
    \label{eq:keypoint_kkt_dual_comp}
    \\
    &\nabla_{z^q}\mathcal{L}
    \notationfix{(z^q;\theta,\theta_J,\lambda_\theta^q,\lambda_u^q,\nu^q)}=0.
    \label{eq:keypoint_kkt_stationarity}
\end{align}
\end{subequations}
where we relax the dynamics equality constraints \eqref{eq:keypoint_kkt_primal} to account for learned model error. \notationfix{Here $\epsilon_{\rm dyn}\ge 0$ bounds the equality-constraint mismatch.} The inverse KKT problem thus mirrors \eqref{eq:prelim_inverse_kkt}. 
% \begin{equation}
% \begin{aligned}
%     \text{find} \quad
%     & \theta,\theta_J,\ 
%     \{\lambda_\theta^q,\lambda_u^q,\nu^q\}_{q=1}^{D} \\
%     \mathrm{s.t.} \quad
%     & \mathrm{KKT}(z^q;\theta,\theta_J,\lambda_\theta^q,\lambda_u^q,\nu^q),
%     \qquad q=1,\ldots,D.
% \end{aligned}
% \label{eq:keypoint_inverse_problem_exact}
% \end{equation}
\looseness-1\logicfix{Because keypoint tracking error, model error in $f_\psi$, and suboptimal demonstrations can make the exact KKT conditions infeasible, we solve a relaxed inverse problem} that preserves primal and dual feasibility and minimizes stationarity residuals. \notationfix{Let $\Theta$ and $\Theta_J$ denote the admissible constraint- and cost-parameter sets, respectively, and let $\beta>0$ weight the dynamics-equality slack:}
\begin{equation}
\begin{aligned}
\min_{\substack{\theta,\theta_J,\\
\{\lambda_\theta^q,\lambda_u^q,\\\nu^q,s^q\}_{q=1}^{D}}}\ 
&\notationfix{\textstyle\sum_{q=1}^{D}\left[
\bigl\|\nabla_{z^q}\mathcal{L}
(z^q;\theta,\theta_J,\lambda_\theta^q,\lambda_u^q,\nu^q)\bigr\|_1
+\beta\|s^q\|_1\right]}\\
\mathrm{s.t.}\ \
&\eqref{eq:keypoint_kkt_primal},\ \eqref{eq:keypoint_kkt_dual_comp},
\quad \theta\in\Theta,\ \theta_J\in\Theta_J,\quad q=1,\ldots,D.
\end{aligned}
\label{eq:keypoint_inverse_problem_relaxed}
\end{equation}
Complementarity can be enforced exactly using binary variables and Big-M constraints \cite{ral20} by framing \eqref{eq:keypoint_inverse_problem_relaxed} as a mixed-integer program.  
% ; in our implementation, we first attempt to solve exact KKT feasibility \eqref{eq:keypoint_inverse_problem_exact} and fall back to the relaxed stationarity formulation \eqref{eq:keypoint_inverse_problem_relaxed} when needed. 
The recovered parameters $\theta$ define the learned keypoint constraints, and the associated multipliers indicate which keypoint constraints are active along the demonstrations. 
\logicfix{For multi-stage tasks, we introduce stage-dependent candidate constraints and parameters. Stage assignments may be specified or jointly inferred with the constraint parameters, as in Sec.~\ref{subsec:exp_deformable}. Different geometric relations can therefore explain different task phases.} 
% Since the inverse problem~\eqref{eq:keypoint_inverse_problem_relaxed} includes many
% candidate keypoint--mask pairs $(i,j)$ and orientation pairs $(a,b,\hat{e})$ when
% $N$ and $M$ are large, we prune inactive constraints to
% reduce the number enforced during planning. Details of the pruning and
% sensitivity checks are in \cite[App. B]{kcl_supplement}.
% The inverse problem~\eqref{eq:keypoint_inverse_problem_relaxed} includes many
% candidate keypoint--mask pairs $(i,j)$ and orientation pairs $(a,b,\hat{e})$.
% Their observed geometric extrema alone do not identify which constraints explain
% the demonstrations, while exhaustive subset enumeration over $NM$ distance
% candidates involves $2^{NM}$ possibilities. KCL jointly infers constraint
% parameters and activity through the KKT conditions, then prunes candidates that
% are inactive throughout the demonstrations or whose boundary loosening causes
% negligible increase in the minimum stationarity residual. This retains
% constraints with explanatory value under the assumed objective and learned
% dynamics, while reducing unnecessary restrictions during planning. Details of
% the pruning and sensitivity checks are in \cite[App. B]{kcl_supplement}.
Rather than enumerating subsets of the many candidate distance and orientation
relations, KCL jointly infers their parameters and activity through the KKT
conditions. It then prunes candidates that are always inactive or insensitive
to boundary loosening, retaining explanatory constraints while avoiding
unnecessary planning restrictions; details are in \cite[App. B]{kcl_supplement}.

\noindent\textbf{Sensitivity of KCL. }
As the learning problem \eqref{eq:keypoint_inverse_problem_relaxed} is impacted by keypoint perception and dynamics errors, \logicfix{we derive bounds on the resulting error in the learned distance and orientation thresholds:} 

\begin{theorem}[Learned distance/orientation error bound]\label{thm:error_bound}
Consider the inverse problem \eqref{eq:keypoint_inverse_problem_relaxed}
\notationfix{on ideal reduced trajectories $\mathcal Z=\{z^q\}_{q=1}^D$ and perceived
approximations $\hat{\mathcal Z}=\{\hat z^q\}_{q=1}^D$, following
\eqref{eq:keypoint_demo}. Assume the constraint parameters are shared across time within the stage under consideration and
$\theta_J$ is known.} Suppose the perceived keypoints and centroids satisfy
$\|\hat p_{i,t}^q-p_{i,t}^q\|_2\le \epsilon_k$ and
$\|\hat c_{j,t}^q-c_{j,t}^q\|_2\le \epsilon_c$ for all $q,t,i,j$, and define
$r_d:=\epsilon_k+\epsilon_c$. Let $\rho\in\mathbb R^{n_d}$ and
$\sigma\in\mathbb R^{n_o}$ denote the learned distance-margin and
orientation-threshold parameters in
\eqref{eq:keypoint_mask_distance_constraint} and
\eqref{eq:keypoint_orientation_constraint}, respectively. Assume all
orientation-defining keypoint pairs satisfy
$\|p_{a,t}^q-p_{b,t}^q\|_2\ge \ell_{\min}>2\epsilon_k$, and define
$r_o:=2\epsilon_k/(\ell_{\min}-2\epsilon_k)$. 
Let $(\rho^\star,\sigma^\star)$, $(\tilde\rho,\tilde\sigma)$, and
$(\hat\rho,\hat\sigma)$ denote the recovered parameters using, respectively:
exact perception and exact dynamics Jacobians; perturbed perception and exact
Jacobians at the perceived trajectory; and perturbed perception with learned
Jacobians. Assume these three problems share the same reduced active set and
that each active geometric parameter remains pinned by the same feature, i.e.,
$\rho_i^\star=\|p_{a_i,t_i}^{q_i}-c_{b_i,t_i}^{q_i}\|_2^2$,
$\tilde\rho_i=\|\hat p_{a_i,t_i}^{q_i}-\hat c_{b_i,t_i}^{q_i}\|_2^2$,
$\sigma_j^\star=\phi_j(P_{t_j}^{q_j})$, and
$\tilde\sigma_j=\phi_j(\hat P_{t_j}^{q_j})$.

\notationfix{Let $f^\star$ denote the exact keypoint-space dynamics map, and define
$\tilde A_t^q:=\frac{\partial f^\star}{\partial P}(\hat P_t^q,u_t^q)$,
$\tilde B_t^q:=\frac{\partial f^\star}{\partial u}(\hat P_t^q,u_t^q)$,
$\hat A_t^q:=\frac{\partial f_\psi}{\partial P}(\hat P_t^q,u_t^q)$, and
$\hat B_t^q:=\frac{\partial f_\psi}{\partial u}(\hat P_t^q,u_t^q)$.
Suppose the learned Jacobians satisfy
$\left\|\begin{bmatrix}
\hat A_t^q-\tilde A_t^q & \hat B_t^q-\tilde B_t^q
\end{bmatrix}\right\|_2\le \epsilon_f$ for all $q,t$.}
\highvaluefix{Let $F(y;\zeta)=0$, with
$y=(\rho,\sigma,\lambda,\nu,\ldots)$, denote the reduced active-set inverse-KKT
system. For the data $\tilde\zeta$ with perturbed perception and exact Jacobians,
define $\mathcal U_{\rm jac}$ as all data with the same perceived trajectory as
$\tilde\zeta$ and Jacobian perturbations satisfying
$\max_{q,t}\left\|\begin{bmatrix}
A_t^q-\tilde A_t^q & B_t^q-\tilde B_t^q
\end{bmatrix}\right\|_2\le\epsilon_f$. Assume that, for every
$\zeta\in\mathcal U_{\rm jac}$, the reduced KKT system has a locally unique solution
with the same active set, that
$\bar\nu:=\sup_{\zeta\in\mathcal U_{\rm jac}}\|\nu(\zeta)\|_2<\infty$, and that there
exist $\kappa_\rho,\kappa_\sigma>0$ such that
$\|\rho(\zeta)-\rho(\tilde\zeta)\|_2
\le
\kappa_\rho
\|F(y(\tilde\zeta);\zeta)-F(y(\tilde\zeta);\tilde\zeta)\|_2$
and
$\|\sigma(\zeta)-\sigma(\tilde\zeta)\|_2
\le
\kappa_\sigma
\|F(y(\tilde\zeta);\zeta)-F(y(\tilde\zeta);\tilde\zeta)\|_2$.}
Finally, let
$\tilde d_{\max}:=\max_i
\|\hat p_{a_i,t_i}^{q_i}-\hat c_{b_i,t_i}^{q_i}\|_2$. Then
\begin{equation}
\begin{aligned}
\|\hat\rho-\rho^\star\|_2
&\le\sqrt{n_d}\left(2\tilde d_{\max}r_d+r_d^2\right)
+\kappa_\rho\epsilon_f\bar\nu,\\
\|\hat\sigma-\sigma^\star\|_2
&\le\sqrt{n_o}\,r_o+\kappa_\sigma\epsilon_f\bar\nu.
\end{aligned}
\end{equation}
\end{theorem}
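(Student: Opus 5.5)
The plan is to use the triangle inequality across the intermediate solution $(\tilde\rho,\tilde\sigma)$:
\[
\|\hat\rho-\rho^\star\|_2\le\|\tilde\rho-\rho^\star\|_2+\|\hat\rho-\tilde\rho\|_2 ,
\]
and the same split for $\sigma$. The first term isolates the perception error with the dynamics held exact. The second isolates the Jacobian error with the perceived trajectory held fixed. Each piece is bounded separately.

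\textbf{Perception term.} Since the active set and pinning features are shared, each active parameter equals an explicit geometric feature. Write $d_i:=p_{a_i,t_i}^{q_i}-c_{b_i,t_i}^{q_i}$ and $\hat d_i$ for its perceived counterpart. Then $\|\hat d_i-d_i\|_2\le\epsilon_k+\epsilon_c=r_d$, and
\[
\bigl|\|\hat d_i\|^2-\|d_i\|^2\bigr| = \bigl|\,\|\hat d_i\|-\|d_i\|\,\bigr|\,\bigl(\|\hat d_i\|+\|d_i\|\bigr) .
\]
The first factor is at most $r_d$. The second is at most $2\|\hat d_i\|+r_d\le 2\tilde d_{\max}+r_d$, since $\|d_i\|\le\|\hat d_i\|+r_d$. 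Hence each component error is at most $2\tilde d_{\max}r_d+r_d^2$, and stacking the $n_d$ components gives the factor $\sqrt{n_d}$.

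For orientation, $\phi_j=(\hat e_j^\top v/\|v\|)^2$. Its unsquared counterpart $w_j:=\hat e_j^\top v/\|v\|$ has modulus at most $1$, so $|\phi_j(\hat P)-\phi_j(P)|=|w_j(\hat P)^2-w_j(P)^2|\le 2\,|w_j(\hat P)-w_j(P)|$, which in turn is at most $2\|\hat v/\|\hat v\|-v/\|v\|\|_2$. Here $\|\hat v-v\|\le2\epsilon_k$ and $\|\hat v\|\ge\ell_{\min}-2\epsilon_k$. The standard estimate
\[
\Bigl\|\frac{\hat v}{\|\hat v\|}-\frac{v}{\|v\|}\Bigr\|\le\frac{2\|\hat v-v\|}{\max(\|v\|,\|\hat v\|)}
\]
gives a bound of order $2\epsilon_k/(\ell_{\min}-2\epsilon_k)=r_o$, up to a constant. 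The constant must be tracked so the final result is exactly $r_o$. The tool for this is the sharper fact that the squared cosine is Lipschitz on the sphere with constant $1$ in angle (its derivative is $\sin 2\theta$), combined with the angle bound $\sin\angle(\hat v,v)\le 2\epsilon_k/\|\hat v\|$. The per-component bound is therefore $r_o$, and the $\ell_2$ norm gives $\sqrt{n_o}\,r_o$.

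\textbf{Jacobian term.} I will evaluate the residual of the reduced KKT system $F$ at the solution $y(\tilde\zeta)$ under the perturbed data $\zeta$, where $\zeta$ has the learned Jacobians, lies in $\mathcal U_{\rm jac}$, and gives $\hat\rho=\rho(\zeta)$. The data enter $F$ only through the stationarity rows, which contain terms $(\nu^q)^\top\nabla_z h$. The dynamics constraint is $h_t=P_{t+1}-f(P_t,u_t)$, so the $P_t$ and $u_t$ blocks of the stationarity condition involve $-A_t^\top\nu_t$ and $-B_t^\top\nu_t$. The $J$, $g_\theta$ and $g_u$ terms are unchanged because the trajectory is the same. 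Hence
\[
F(y(\tilde\zeta);\zeta)-F(y(\tilde\zeta);\tilde\zeta)=\text{stacked }\bigl[\,A_t-\tilde A_t\;\;B_t-\tilde B_t\,\bigr]^\top\nu_t .
\]

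The main obstacle is bounding this stacked vector by $\epsilon_f\|\nu\|_2$ rather than by something of the form $\epsilon_f\sum_t\|\nu_t\|$. The resolution is the block structure: the block-diagonal operator $\mathrm{diag}_t\bigl([A_t-\tilde A_t\;\,B_t-\tilde B_t]^\top\bigr)$ has spectral norm $\max_t\|\cdot\|_2\le\epsilon_f$, even though $P_t$ appears in two time blocks. A careful arrangement is needed: the rows are indexed by $(P_t,u_t)$, and each receives exactly one $\nu_t$ contribution from the Jacobian. This gives $\|\Delta F\|\le\epsilon_f\|\nu(\tilde\zeta)\|_2\le\epsilon_f\bar\nu$. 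The assumed stability constants $\kappa_\rho$ and $\kappa_\sigma$ then yield $\kappa_\rho\epsilon_f\bar\nu$ and $\kappa_\sigma\epsilon_f\bar\nu$. Adding these to the perception bounds completes the argument.
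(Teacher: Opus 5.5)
Your decomposition through $(\tilde\rho,\tilde\sigma)$, the difference-of-squares bound for the distance margins, and the Jacobian term all match the paper's proof. In the Jacobian part you also justify a step the paper only asserts. The Jacobian-dependent part of the $(P_t,u_t)$ stationarity rows is exactly $[A_t-\tilde A_t\;\,B_t-\tilde B_t]^\top\nu_t$, because the $+\nu_{t-1}$ term in the $P_t$ row does not depend on the data. Hence $\|\Delta F_{\rm dyn}\|_2^2\le\epsilon_f^2\sum_{q,t}\|\nu_t^q\|_2^2$.

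The gap is in the orientation constant, which you rightly flagged as the delicate point. The $1$-Lipschitz property of $\cos^2$ in angle, with the spherical triangle inequality, gives $|\phi_j(\hat P)-\phi_j(P)|\le\angle(\hat v,v)$. So you need a bound on the angle itself. You only bound $\sin\angle(\hat v,v)\le 2\epsilon_k/\|\hat v\|_2\le r_o$. Since $\angle\ge\sin\angle$, this gives only $\angle(\hat v,v)\le\arcsin r_o$, which is strictly larger than $r_o$. So ``the per-component bound is therefore $r_o$'' does not follow as written.

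The repair is short. Write $\hat v=v+\delta$ with $\|\delta\|_2\le 2\epsilon_k$. The component of $\hat v$ along $v/\|v\|_2$ is at least $\|v\|_2-\|\delta\|_2>0$, so the angle is acute because $\|\delta\|_2<\ell_{\min}\le\|v\|_2$. The orthogonal component has norm at most $\|\delta\|_2$. Therefore $\angle(\hat v,v)\le\tan\angle(\hat v,v)\le\|\delta\|_2/(\|v\|_2-\|\delta\|_2)\le 2\epsilon_k/(\ell_{\min}-2\epsilon_k)=r_o$.

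Alternatively, follow the paper. With $\alpha:=v^\top\hat e/\|v\|_2$, compute $\|\nabla_v\phi(v)\|_2=2|\alpha|\sqrt{1-\alpha^2}/\|v\|_2\le 1/\|v\|_2$; this is the same $|\sin 2\theta|\le 1$ fact you invoke. Then apply the mean value theorem along the segment $v_s=v+s(\hat v-v)$, where $\|v_s\|_2\ge\ell_{\min}-2\epsilon_k$. This yields $|\phi(\hat v)-\phi(v)|\le r_o$ directly, with no conversion between angles and their sines.
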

A proof is provided in \cite[App. A]{kcl_supplement}.

\vspace{3pt}
\subsection{Planning with Keypoint Dynamics and Constraints}
\label{subsec:method_planning}

\looseness-1Given the learned dynamics $f_\psi$ and constraint
parameters $\theta$, we plan new trajectories by solving the
forward control problem
\eqref{eq:keypoint_forward_problem} from a new start condition
$P_{\rm start}$ to a desired goal $P_{\rm goal}$. 
% \paragraph{Trajectory optimization.}
Parameterizing the plan as a sequence of reduced states and controls
$\{P_0, \ldots, P_T, u_0, \ldots, u_{T-1}\}$, \highvaluefix{let $\eta_1,\eta_2\ge 0$ be cost weights and let
$\ell(P_T)$ denote the terminal goal cost, e.g.,
$\ell(P_T)=\sum_i\|p_{i,T}-p_{i,\rm goal}\|_2^2$ for relevant keypoints.}
Using IPOPT, we solve to local optimality the following:
\begin{equation}
\begin{aligned}
\hspace{-7pt}\min_{\substack{P_{0:T},\\u_{0:T-1}}}\ 
&\highvaluefix{\textstyle\sum_{t=0}^{T-1}\|K_{t+1}-K_t\|_2^2 + \eta_1\sum_{t=0}^{T-1}\|u_t\|_2^2+\eta_2\ell(P_T)}\\
\mathrm{s.t.}\quad
&P_{t+1}=f_\psi(P_t,u_t), \quad t=0,\ldots,T-1,\\
&P_0=P_{\rm start},\\
    % & \|P_{t+1} - f_\psi(P_t,u_t)\|_2^2 \;\leq\; \delta_{\rm dyn}^2,
    % \qquad t=0,\ldots,T-1;\qquad P_0 = P_{\rm start},\\
&g_\theta(P_t)\leq0, \quad t=0,\ldots,T,\\
&g_u(u_t)\leq0, \quad t=0,\ldots,T-1.
\end{aligned}
\label{eq:planning_problem}
\end{equation}
% The dynamics equality $P_{t+1}=f_\psi(P_t,u_t)$ is relaxed to a
% soft tube $\|P_{t+1}-f_\psi(P_t,u_t)\|_2^2 \leq \delta_{\rm dyn}^2$,
% which improves numerical conditioning and tolerates small model
% inaccuracies. 
\logicfix{The learned geometric constraints $g_\theta$ are enforced as hard inequalities, so the nominal model trajectory satisfies the inferred task requirements. We next account for deviations caused by learned-model error during execution.}

% \paragraph{Robust planning.}
% Since the learned dynamics $f_\psi$ are imperfect, small one-step prediction errors can accumulate into constraint violations during execution. We therefore robustify \eqref{eq:planning_problem} using a system level synthesis (SLS) \cite{anderson2019system} tube around the nominal keypoint trajectory. Let $(\bar P_{0:T},\bar u_{0:T-1})$ denote the nominal plan.
\looseness-1Since the learned dynamics $f_\psi$ are imperfect, small one-step prediction
errors can accumulate into constraint violations during execution. We therefore
robustify \eqref{eq:planning_problem} using a system-level synthesis (SLS)
\cite{anderson2019system} tube around the nominal keypoint trajectory. Let
$(\bar P_{0:T},\bar u_{0:T-1})$ denote the nominal plan. Linearizing the learned
keypoint dynamics around this plan gives the tracking-error dynamics
 $\delta P_{t+1} = A_t \delta P_t + B_t \delta u_t + w_t$, where $A_t := \frac{\partial f_\psi}{\partial P}(\bar P_t,\bar u_t)$, $B_t := \frac{\partial f_\psi}{\partial u}(\bar P_t,\bar u_t)$, 
% where $w_t \in \mathcal W_t$ represents bounded keypoint-dynamics error estimated from held-out one-step prediction residuals following \cite{srinivasan2026safety}. SLS parameterizes the closed-loop responses from disturbances to keypoint ($\delta P$) and control deviations ($\delta u$) as
and where $w_t \in \mathcal W_t$ represents bounded keypoint-dynamics error,
estimated from held-out one-step prediction residuals following
\cite{srinivasan2026safety}. Let
$w := (w_0,\ldots,w_{T-1})$ denote the stacked disturbance sequence. SLS
parameterizes the closed-loop responses from disturbances to keypoint and
control deviations as $\delta P = \Phi_P w$ and $\delta u = \Phi_u w$, 
where $(\Phi_P,\Phi_u)$ satisfy the finite-horizon system-level dynamics constraints induced by $(A_t,B_t)$. 
For each learned distance constraint indexed by $i$ \eqref{eq:keypoint_mask_distance_constraint}, let $(a_i,b_i)$ denote the
corresponding keypoint--centroid pair. 
\highvaluefix{Define the stacked disturbance set
$\mathcal W:=\mathcal W_0\times\cdots\times\mathcal W_{T-1}$.}
From $\Phi_P$ and the disturbance sets $\{\mathcal W_t\}_{t=0}^{T-1}$, we compute
\highvaluefix{the worst-case tube radius} \eqref{eq:tube} for this relative displacement, which can be evaluated in closed-form for SLS \cite{srinivasan2026safety}:
\begin{equation}\label{eq:tube}
\begin{aligned}
    \highvaluefix{r_{i,t}}&\highvaluefix{:=
    \textstyle\sup_{w\in\mathcal W}
    \|\delta p_{a_i,t}(w)-\delta c_{b_i,t}(w)\|_2,}
\end{aligned}
\end{equation}
We then replace the distance constraint in
\eqref{eq:planning_problem} with the tightened robust constraint
\begin{equation}
    g^{\mathrm{rob}}_{i,t}(\bar P_t)
    :=
    \left(\sqrt{\rho_i} + r_{i,t}\right)^2
    -
    \|\bar p_{a_i,t} - \bar c_{b_i,t}\|_2^2
    \le 0.
\end{equation}
Thus, the nominal planned keypoint--centroid distance must exceed the learned
clearance $\theta_i=\sqrt{\rho_i}$ by the SLS-predicted tracking-tube radius, serving as a conservative constraint satisfaction margin due to dynamics error. Orientation constraints can be similarly robustified. \logicfix{Thm.~\ref{thm:error_bound} is integrated with the SLS tube for robust control in Sec.~\ref{subsec:exp_deformable}.}

% From $\Phi_P$ and $\mathcal W_t$, we can compute a per-time-step keypoint reachable tube of radius $r_{i,t}$ for each constrained keypoint $p_{i,t}$. For a learned distance constraint
% \begin{equation}
% g^{\mathrm{dist}}*{i,t}(P_t;\rho_i)
% =
% \rho_i - |p*{a_i,t}-c_{b_i,t}|*2^2
% \le 0,
% \qquad
% \theta_i := \sqrt{\rho_i},
% \end{equation}
% we replace the nominal constraint in (\ref{eq:planning_problem}) with the tightened constraint
% \begin{equation}
% g^{\mathrm{rob}}*{i,t}(P_t)
% :=
% \left(\theta_i + r_{i,t}\right)^2
% -
% |p_{a_i,t}-c_{b_i,t}|_2^2
% \le 0.
% \end{equation}
% Thus, the nominal planned keypoint must remain farther from the learned constraint boundary by an amount equal to the SLS-predicted tracking tube. When online keypoint feedback is available, the corresponding SLS feedback correction can be applied as $u_t=\bar u_t+\delta u_t$. In the hardware rope experiments, real-time keypoint feedback is not used; we execute the nominal controls $\bar u_t$ but retain the SLS tube as an offline robustness margin. This makes the planner conservative with respect to calibrated dynamics error without manually enlarging the learned constraint parameter $\theta_i$ itself.

% At execution time, the planned keypoint trajectory is tracked using a
% heading controller that advances waypoints as the robot approaches each
% planned state within a proximity threshold.
% \textcolor{red}{TODO: planning design.}
	
%===============================================================================
\section{Experiments}
\label{sec:experiments}

\begin{table*}[t]
    \centering
    \small
    \caption{
    Full numerical summary of keypoint constraint learning and planning results.
    ``Demo pts.'' are trajectory points used for inverse-KKT constraint learning.
    ``Val num.'' are the number of planning validation trials.
    \logicfix{``Clearance error'' is the absolute error in the recovered distance-clearance radius; for tasks also with orientation constraints, this reports the distance-clearance error.}
    ``Const. safe rate'' is satisfaction rate of the demonstration-inferred constraint during planning, 
    ``Obs. safe rate'' is satisfaction of the ground-truth obstacle-safety check, 
    and ``Goal reach'' is terminal task completion.
    Values reported as mean $\pm$ standard deviation are computed across evaluation batches.\vspace{-5pt}
    }
    \label{tab:numerical_results_summary}
    \resizebox{\textwidth}{!}{
    \begin{tabular}{lllccccccccc}
        \toprule
        Experiment & Setting & Method
        & Demo pts.
        & Val num.
        & Dyn. eval loss
        & KCL runtime (s)
        & Final stat.
        & \logicfix{Clearance err. (m)}
        & Const. Safe rate
        & obs. safe rate
        & Goal reach \\
        \midrule

        \multicolumn{11}{l}{\textbf{Dubins-car obstacle avoidance}} \\
        Dubins & Simulation & \MethodOurs  & 1640 & 1000 & 0.0028 & 2.3 & 2.09 & $0.011$ & $1.0\pm0.000$ & $1.0\pm0.000$ & $1.0\pm0.000$ \\
        Dubins & Simulation & \BaseOne     & 1640 & 1000 & \NA & \NA & \NA & \NA & $0.184\pm0.017$ & $0.997\pm0.005$ & $1.0\pm0.000$\\
        Dubins & Simulation & \BaseTwo     & 1640 & 1000 & 0.0752 & \NA & \NA & \NA & $0.0\pm0.000$ & $0.07\pm0.021$ & $1.0\pm0.000$\\
        Dubins & Simulation & \BaseThree   & 1640 & 1000 & \NA & \NA & \NA & \NA & $0.0\pm0.000$ & $0.608\pm0.063$ & $1.0\pm0.000$\\
        Dubins & Real       & \MethodOurs  & 815 & 5 & 0.0038259 & 4.1 & 0.2429 & $0.021$ & $1.0\pm 0.000$ & $1.0\pm0.000$ & $1.0\pm0.000$\\
        \midrule

        \multicolumn{11}{l}{\textbf{KUKA arm manipulation}} \\
        KUKA & Simulation & \MethodOurs  & 465 & 1000 & 0.002135 & 3.7 & 0.42491 & $0.03$& $1.0\pm0.000$ & $1.0\pm0.000$ & $1.0\pm0.000$ \\
        KUKA & Simulation & \BaseOne     & 465 & 1000 & \NA & \NA      & \NA      & \NA               & $0.635 \pm 0.051$   & $0.971 \pm 0.023$ & $1.0\pm 0.000$\\
        KUKA & Simulation & \BaseTwo     & 465 & 1000 & 0.018986 & \NA      & \NA      & \NA            & $0.003\pm 0.007$      & $0.021 \pm 0.014$ & $1.0\pm 0.000$\\
        KUKA & Simulation & \BaseThree   & 465 & 1000 & \NA & \NA      & \NA      & \NA              & $0.649\pm 0.051$     & $0.977\pm 0.016$ & $1.0\pm 0.000$\\
        KUKA & Real       & \MethodOurs  & 1004 & 5 & 0.11283 & 4.9 & 0.89369 & $0.045$      & $1.0\pm 0.000$         & $1.0\pm 0.000$ & $1.0\pm 0.000$\\
        \midrule

        \multicolumn{11}{l}{\textbf{Rope obstacle avoidance}} \\
        Rope obstacle & Simulation & \MethodOurs & 600 & 100 & 0.035998 & 2.5 & 0.51843 &$ 0.032$ & $0.95\pm 0.05$ & $0.92\pm 0.12$ & $1.0\pm 0.000$\\
        Rope obstacle & Simulation & \BaseTwo     & 600 & 100 & 0.001144 & \NA      & \NA      & \NA            & $0.38\pm 0.11$      & $0.82 \pm 0.13$ & $0.0\pm 0.000$\\
        Rope obstacle & Simulation & \BaseThree   & 600 & 100 & \NA & \NA      & \NA      & \NA              & $0.19\pm 0.14$     & $0.24\pm 0.15$ & $1.0\pm 0.000$\\
        Rope obstacle & Real       & \MethodOurs & 1240 & 6 & 0.00932 & 2.5 & 1.533 & $0.029$                & $1.0\pm 0.000$ & $1.0\pm 0.000$ & $1.0\pm 0.000$ \\
        \midrule

        \multicolumn{11}{l}{\textbf{Multi-stage }} \\
        \logicfix{Rope knot tying} & Simulation & \MethodOurs & 70 & \NA & 0.0047 & 2.3 & 0.0097869 & \NA & \NA & \NA & \NA \\
        Cup picking and pouring & Real & \MethodOurs & 375 & 5 & 0.002504 & 3.5 & 0.53715 & $0.0201$ & $1.0\pm 0.000$ & $1.0\pm 0.000$& $1.0\pm 0.000$\\
        Hammer placing & Real & \MethodOurs & 390 & 5 & 0.015228 & 5.4 & 0.58586 & $0.0042$ & $1.0\pm 0.000$ & $1.0\pm 0.000$& $1.0\pm 0.000$\\
        \bottomrule
    \end{tabular}
    }\vspace{-15pt}
\end{table*}

% 2.40^\circ,
% 1.29^\circ,

% We evaluate KCL on constrained robotic tasks of increasing visual and physical complexity: Dubins-car obstacle avoidance, KUKA arm manipulation, and deformable-object manipulation. These experiments test whether constraints can be inferred from keypoint--mask trajectories in simulated and real videos, and how the learned constraints can be used for planning. We report learned constraint visualizations comparing to ground truth when available. The constraints are then used in a downstream planner to design safe trajectories.
To show that keypoint--mask trajectories can support constraint inference and downstream safe planning, we evaluate KCL on Dubins-car obstacle avoidance, 7-DOF KUKA arm manipulation, deformable rope manipulation and two multi-stage human demonstration tasks in simulation and hardware. 
\logicfix{We report inverse-learning metrics (KKT stationarity residual, learned-vs-ground-truth constraint error when available, and one-step dynamics loss) and planning metrics (learned-constraint satisfaction, obstacle safety, and goal reaching).} We compare KCL against three baselines on the Dubins-car, KUKA and the rope tasks. \textbf{K-VIL}~\cite{gao2023k} runs a keypoint-based movement imitation pipeline. \textbf{ReKep}~\cite{huang2024rekep} uses a VLM to generate relational keypoint constraint templates for planning. \textbf{MBIRL}~\cite{mbirl} learns a keypoint-space objective with unconstrained inverse RL and plans with learned dynamics, but infers no hard inequality constraints, testing whether soft objectives alone reproduce safe behavior. All baselines use identical demonstration splits, keypoint trajectories, and evaluation task conditions. \logicfix{Unless otherwise stated, we hold $\theta_J$ fixed and known; exceptions are for the rope and the Dubins objective-misspecification ablation, where cost parameters are jointly learned with the constraints.} For rope manipulation, we do not report K-VIL because it targets rigid-body navigation/manipulation and does not support deformable-object constraint parameterizations. The numerical results for all experiments and baselines are shown in Table~\ref{tab:numerical_results_summary}.
\looseness-1More results, including a comparison with \cite{ral20}, learning non-convex
constraints with KCL, and more figures are in ~\cite[App. C]{kcl_supplement}.

\subsection{Dubins-Car Visual Obstacle Avoidance}
We evaluate KCL on an obstacle-avoidance task (Fig.~\ref{fig:dubins_results}) in PyBullet simulation and on a real TurtleBot, using vehicle keypoints $p_{i,t}$ and obstacle-mask centroid $c_{\mathrm{obs},t}$. The learned constraint enforces ground-plane clearance:
$g^{\mathrm{obs}}_{i}(P_t;\rho_i) = \rho_i - \|\Pi_{xy}(p_{i,t}) - \Pi_{xy}(c_{\mathrm{obs},t})\|_2^2 \leq 0$,
where $\rho_i$ is the learned squared distance margin. KCL achieves perfect constraint satisfaction, obstacle safety, and goal reaching in both settings (Table~\ref{tab:numerical_results_summary}, Fig.~\ref{fig:dubins_results}). \logicfix{All baselines also reach the goal, but they do not reliably satisfy the demonstration-inferred clearance: K-VIL, ReKep, and MBIRL achieve $18.4\%$, $0\%$, and $0\%$ learned-constraint satisfaction, with corresponding obstacle-safe rates of $99.7\%$, $60.8\%$, and $7.0\%$.} Constraint learning takes $2.3\,\mathrm{s}$ in simulation and $4.1\,\mathrm{s}$ on hardware. 
\logicfix{To test robustness to objective mis-specification, we replace the fixed displacement cost with an unknown cost mixture $J=\sum_{k=1}^3 \alpha_k J_k$, jointly optimizing the weights $\alpha_k$ and constraint parameters. With squared keypoint displacement $J_1$ (the default in \eqref{eq:keypoint_forward_problem}), goal error $J_2$, and path length $J_3$ as candidates, solving} 
\eqref{eq:keypoint_inverse_problem_relaxed} yields $\alpha_1 = 1$, $\alpha_2=0.063$, and $\alpha_3=0$, with constraint error 0.012. \logicfix{Thus, KCL can recover the constraints and identify squared keypoint displacement as the dominant objective without assuming known weights.}

\begin{figure}[t]
    \centering

        \includegraphics[width=\linewidth]{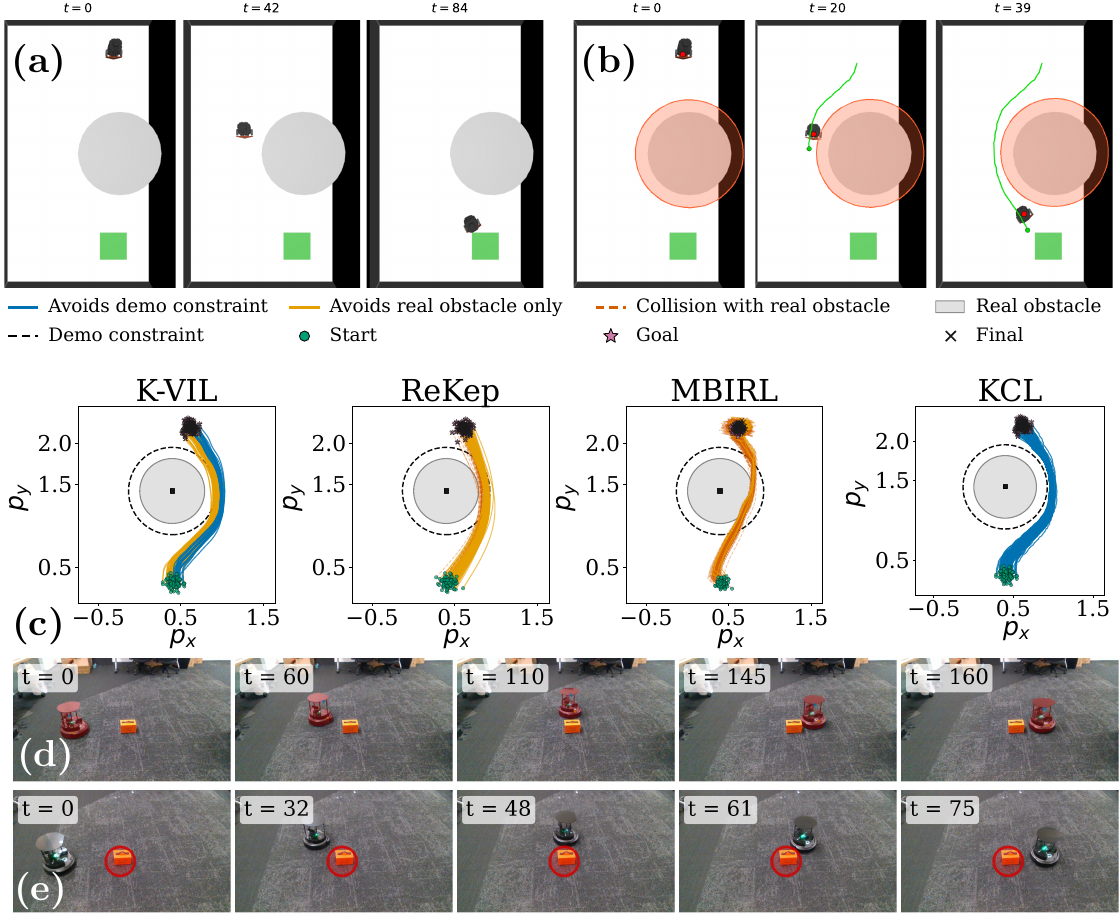}\vspace{-8pt}
    \caption{
    Dubins-car task: (a) Simulated demo and (b) plan (green trajectory is plan from KCL). \logicfix{(c) Rollouts of plans computed via baselines and KCL (only KCL achieves $100\%$ satisfaction of both the demonstration-inferred clearance and the ground-truth obstacle-safety check; see Tab.~\ref{tab:numerical_results_summary}).} (d) Real demo and (e) planning (red circle is constraint learned by KCL). \vspace{-11pt}
    }
    \label{fig:dubins_results}
\end{figure}

\subsection{KUKA Manipulation}
\looseness-1We next evaluate KCL on a KUKA manipulation task (Fig. \ref{fig:kuka_setup}), where the
demonstrator transports a cup to a target while avoiding an obstacle and
keeping the cup within a $30^\circ$ tilt bound, extending KCL to 3D motion
and orientation constraints. Obstacle avoidance uses distance
constraints~\eqref{eq:keypoint_mask_distance_constraint} with learned margin
$\rho_i$; cup orientation uses alignment
constraint~\eqref{eq:keypoint_orientation_constraint} on a cup keypoint pair
$(a,b)$ along world-frame vertical axis $\hat{e}_z$, with learned threshold
$\sigma_{a,b}\in[0,1]$. KCL achieves perfect constraint satisfaction, obstacle
safety, and goal reaching in simulation and hardware
(Table~\ref{tab:numerical_results_summary}, Fig.~\ref{fig:kuka_setup}), while K-VIL,
ReKep, and MBIRL achieve only $63.5\%$, $64.9\%$, and $0.3\%$ constraint
satisfaction and $97.1\%$, $97.7\%$, and $2.1\%$ obstacle safety, respectively.
Constraint learning takes $3.7\,\mathrm{s}$ in simulation and $4.9\,\mathrm{s}$ on hardware.
\begin{figure}[t]
    \centering
        \includegraphics[width=\linewidth]{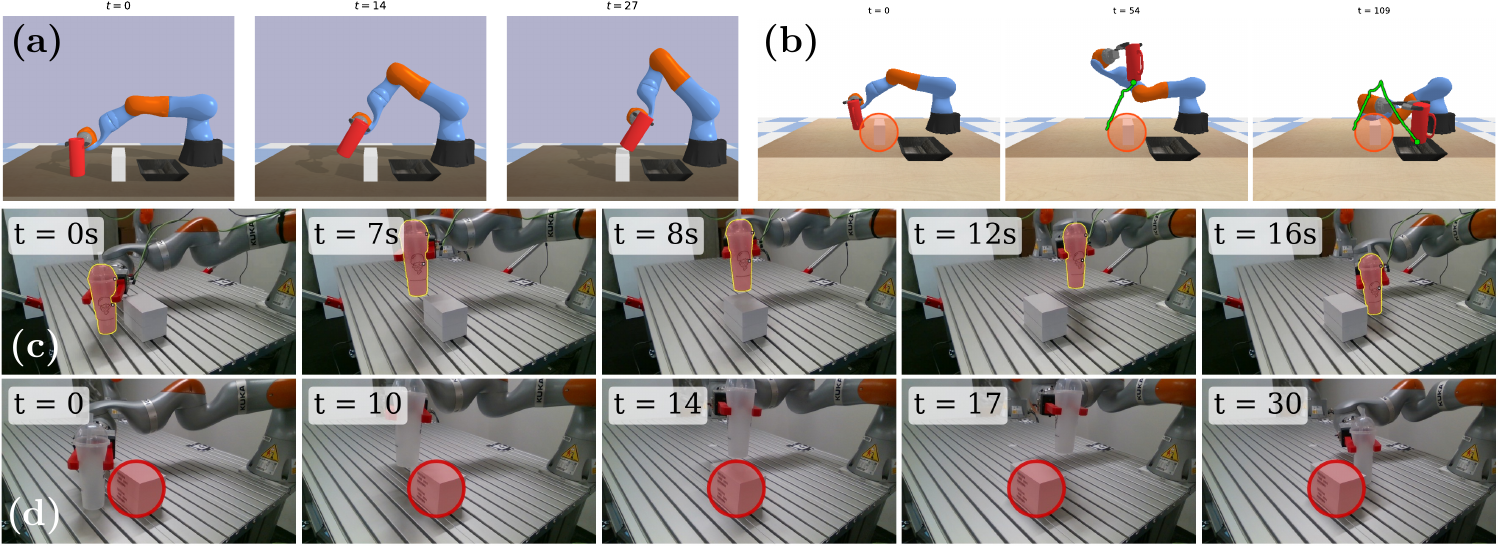}\vspace{-8pt}
    \caption{
    KUKA manipulation: Simulated (a) demo and (b) plan. Hardware (c) demo and (d) plan. In (b), (d), learned constraint (red circle) is overlaid with the true obstacle. In (b), the green trajectory is the planned motion of the bottom mug keypoint, which satisfies the learned obstacle constraint. The mug also satisfies the learned tilting threshold over the full trajectory.\vspace{-18pt}
    }
    \label{fig:kuka_setup}
\end{figure}

\subsection{Deformable Object Manipulation}
\label{subsec:exp_deformable}

% Finally, we evaluate the framework on deformable-object manipulation tasks. Deformable objects are a central motivation for our keypoint-based representation, since their full physical states are difficult to define, estimate, and model analytically. We consider two rope manipulation tasks: obstacle avoidance and knot tying. The rope obstacle-avoidance task includes both MuJoCo simulation and real-world demonstrations using two KUKA arms. The knot-tying task is evaluated in Drake/Filament simulation and uses a multi-stage constraint representation.
% Finally, we evaluate KCL on deformable objects, a key motivation for using keypoints, since full deformable states are hard to model and estimate. We consider rope obstacle avoidance and rope knot tying. To obtain demonstrations, rope obstacle avoidance uses MuJoCo simulation and real-world KUKA demonstrations, while rope knot tying is implemented in Drake with the Filament renderer and a multi-stage constraint representation~\cite{drake}.
We evaluate KCL on deformable-object tasks. We consider rope obstacle avoidance in MuJoCo and real-world KUKA demonstrations, and rope knot tying in Drake/Filament with a multi-stage constraint representation~\cite{drake}.

\noindent\textit{Rope Obstacle.\quad}
\looseness-1In the rope obstacle-avoidance task (Fig.~\ref{fig:rope_obstacle_setup}),
two arms move rope endpoints past a fixed obstacle; since straight-line
pulling would snag the rope, successful demonstrations form a wider V-shape,
requiring the learned constraints to act on intermediate keypoints rather
than endpoints alone. Per-keypoint clearance constraints
$g^{\mathrm{rope}}_{i}(P_t;\rho_i) = \rho_i - \|p_{i,t} -
c_{\mathrm{obs},t}\|_2^2 \leq 0$ let~\eqref{eq:keypoint_inverse_problem_relaxed}
identify which rope segments actively avoid the obstacle. \logicfix{We also learn the cost parameters $\theta_J$: with $\theta_{J,i}\in[0.9,1.1]$ and ground-truth uniform weights $\theta_{J,i}=1$, KCL recovers $\theta_{J,i}=0.9$ for all $i$. Because inverse-optimal cost weights are identifiable only up to a common positive scale, normalizing the recovered vector yields the ground-truth uniform weighting.}
Planning with KCL achieves obstacle-safe goal reaching for simulated and real-world trajectories (Table~\ref{tab:numerical_results_summary},
Fig.~\ref{fig:rope_obstacle_setup}), \logicfix{reproducing the outward-pulling behavior.} Learning takes $2.5\,\mathrm{s}$.

% \begingroup
% \color{red}
\logicfix{To show how Thm.~1 informs downstream robust planning, we apply its uncertainty-margin construction to the rope task.} We added
bounded observation perturbations of
$\epsilon_k=\epsilon_c=5\,\mathrm{mm}$ to the rope keypoints and obstacle-mask
centroid and re-ran KCL on the perturbed demonstration. The recovered squared
clearance changed from
\logicfix{$\rho_{\mathrm{clean}}=0.14559\,\mathrm{m}^2$ for the unperturbed data to $\hat{\rho}=0.14461\,\mathrm{m}^2$ after perturbation.}
For this single retained distance constraint, the perception component of
Thm.~1 gives
$B_{\mathrm{perc}}=7.71\times10^{-3}\,\mathrm{m}^2$. 
We also include the learned-dynamics term in Thm.~1,
$B_{\mathrm{dyn}}=\kappa_\rho\epsilon_f\bar{\nu}$.
\logicfix{Because the exact rope dynamics Jacobians and local inverse-KKT sensitivity constants are unavailable, we use illustrative design values $\epsilon_f=0.02$, $\kappa_\rho=0.10$, and $\bar{\nu}=1.0$ to demonstrate the construction, where $\epsilon_f$ represents a local learned-dynamics Jacobian error budget.}
\logicfix{With these design values,} this gives
$B_{\mathrm{dyn}}=2.00\times10^{-3}\,\mathrm{m}^2$ and hence the total
\logicfix{illustrative Thm.~1 margin}
$B_\rho
=
B_{\mathrm{perc}}+B_{\mathrm{dyn}}
=
9.71\times10^{-3}\,\mathrm{m}^2$. 
We therefore inflate the learned constraint according to
$\rho_{\mathrm{T1}}=\hat{\rho}+B_\rho$, increasing the required clearance from
$\hat{\theta}=0.3803\,\mathrm{m}$ to
$\theta_{\mathrm{T1}}=0.3928\,\mathrm{m}$.

Theorem~1 inflates the recovered constraint to account for keypoint/mask-centroid perception error and learned-dynamics Jacobian error. \highvaluefix{Let $i^\star$ denote the retained rope clearance constraint and define
$r_t^{\mathrm{SLS}}:=r_{i^\star,t}$ using \eqref{eq:tube}.}
Separately, this SLS tube, computed from dynamics prediction error, captures accumulated rollout uncertainty during planning. The robust constraint is
\highvaluefix{$\left\|\bar p_{a_{i^\star},t}-\bar c_{\mathrm{obs}}\right\|_2
\ge \theta_{\mathrm{T1}}+r_t^{\mathrm{SLS}}$.}
\logicfix{Thus, this experiment illustrates how the planner can combine margins for perception error, constraint-recovery dynamics error, and downstream rollout uncertainty.}
% The Theorem~1 inflation accounts for uncertainty in the recovered constraint
% caused by keypoint/mask-centroid perception error and inaccuracies in the
% learned dynamics Jacobians. We then separately use the SLS tube
% $r_t^{\mathrm{SLS}}$, computed from learned-dynamics prediction error, to
% account for accumulated rollout uncertainty during planning. The resulting
% robust planning constraint is
% $\left\|
% \bar p_{K2,t}-\bar c_{\mathrm{obs}}
% \right\|_2
% \ge
% \theta_{\mathrm{T1}}+r_t^{\mathrm{SLS}}$. 
% Thus, the planner simultaneously accounts for perception-induced constraint
% uncertainty, learned-dynamics error during constraint recovery, and accumulated
% dynamics error during downstream planning.

% \endgroup

\begin{figure}[t]
    \centering
        \includegraphics[width=\linewidth]{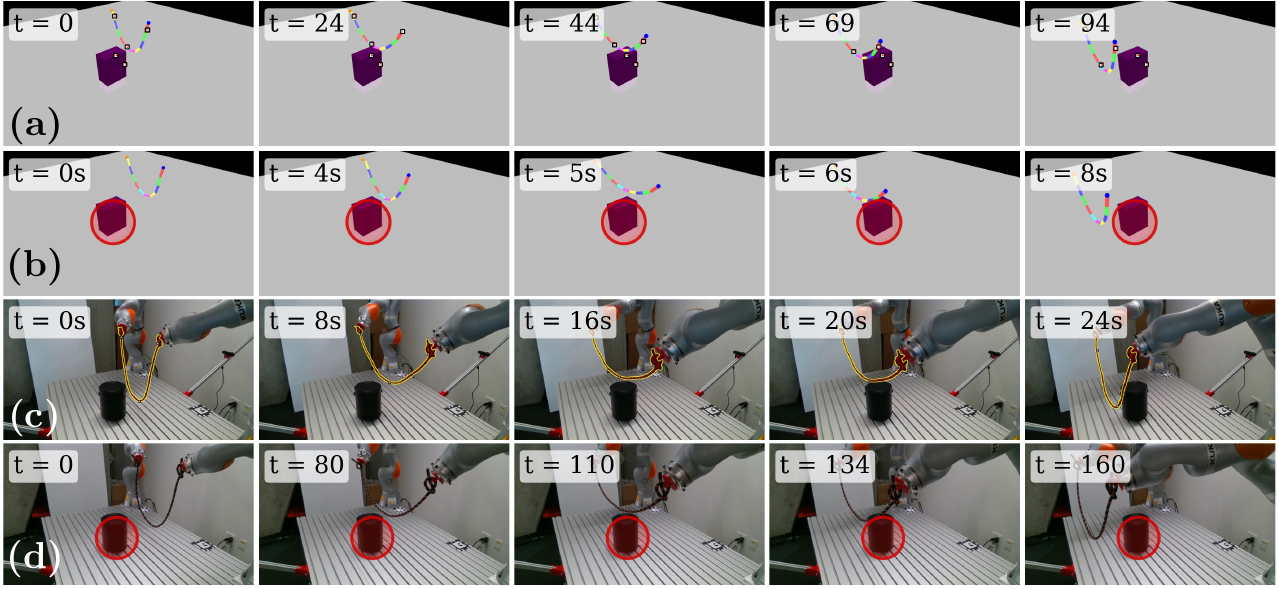}\vspace{-8pt}
    \caption{
    Rope obstacle avoidance: Simulated (a) demo and (b) plan computed by KCL. Hardware (c) demo and (d) plan computed by KCL. In (b), (d), learned constraint (light red circle) is overlaid with the ground-truth obstacle.\vspace{-15pt}
    }
    \label{fig:rope_obstacle_setup}
\end{figure}

\looseness-1\noindent\textit{Rope Knot Tying.\quad}
We evaluate KCL on rope knot tying in Drake/Filament simulation~\cite{drake} (Fig. \ref{fig:rope_knot_setup}).
The task contains distinct phases---loop formation, endpoint threading, and tightening---with changing constraint structure, making a single constraint insufficient. 
KCL instead jointly recovers latent geometric constraints,
latent semantic phase boundaries, and KKT-consistent explanations of expert
behavior, without assuming phase boundaries are known \textit{a priori}. 
\highvaluefix{The rope is represented by five keypoints: left endpoint $p_{1,t}$, right
endpoint $p_{5,t}$, and three interior loop-forming keypoints
$p_{2,t},p_{3,t},p_{4,t}$, with loop centroid
$c_{\mathrm{loop},t}=\tfrac{1}{3}(p_{2,t}+p_{3,t}+p_{4,t})$ and
demonstrated final position $p_{\mathrm{final}}:=p_{5,T}$.}
We partition the demonstration into three stages with stage-specific
constraints.
Stage~1 (\emph{insertion}, $[0,t_1-1]$) has no learned inequality.
\highvaluefix{Stage~2 (\emph{loop proximity}, $[t_1,t_2]$) constrains the right endpoint
to remain within squared distance $\rho_{\mathrm{loop}}$ of the loop centroid:
$g^{\mathrm{loop}}(P_t;\rho_{\mathrm{loop}}):=\|p_{5,t}-c_{\mathrm{loop},t}\|_2^2
-\rho_{\mathrm{loop}}\leq 0$.}
\notationfix{Stage~3 (\emph{final pull}, $[t_2+1,T]$) constrains the right endpoint}
\highvaluefix{to remain near $p_{\mathrm{final}}$ via threshold $\rho_{\mathrm{far}}$:
$g^{\mathrm{final}}(P_t;\rho_{\mathrm{far}}):=\|p_{5,t}-p_{\mathrm{final}}\|_2^2
-\rho_{\mathrm{far}}\leq 0$.
The learned thresholds admit direct geometric interpretations as radii:
$\theta_{\mathrm{loop}}=\sqrt{\rho_{\mathrm{loop}}}$ and
$\theta_{\mathrm{far}}=\sqrt{\rho_{\mathrm{far}}}$.}

\looseness-1Crucially, stage boundaries $t_1,t_2$ are \emph{co-solved} with constraint parameters \highvaluefix{$\rho_{\mathrm{loop}},\rho_{\mathrm{far}}$} in the MILP using binary phase indicators and Big-M encoding, rather than specified a priori. Figure~\ref{fig:rope_knot_setup} shows that solving~\eqref{eq:keypoint_inverse_problem_relaxed} recovers these quantities near the ground truth, jointly identifying \emph{what} constraints define knot-tying and \emph{when} they are active while satisfying local KKT optimality. Constraint learning takes $2.3\,\mathrm{s}$. KCL can also select among candidate stage sequences rather than assuming a fixed sequence or number of stages. We consider three hypotheses: \emph{loop-then-final} (ground truth), \emph{z-cross-then-final}, and \emph{final-only}. Using one-hot binary variables to encode these alternatives, KCL jointly selects the sequence, stage boundaries, and constraints. It recovers the correct \emph{loop-then-final} sequence, with stationarity residual $5.206\times10^{-5}$ versus $5.24\times10^{-5}$ and $5.27\times10^{-5}$ for the distractors.

\begin{figure*}[t]
        \centering
        \includegraphics[width=\linewidth]{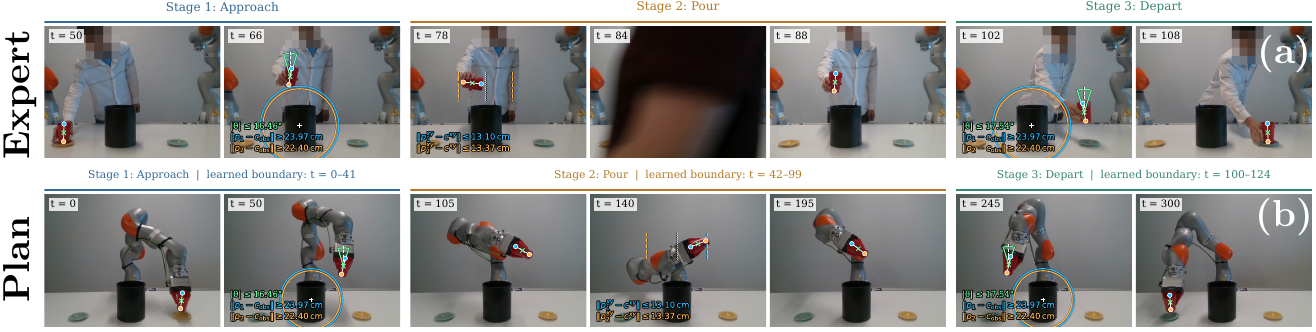}\vspace{-12pt}
        \caption{\looseness-1\textbf{(a)} Keypoint constraints learned from a video of a human lifting, pouring, and placing a mug, where low-level constraints between objects are learned together with the task segmentation (Stages 1-3). We also have occlusions in the demonstration. \textbf{(b)} KCL plans a safe trajectory using the constraints.}
    \label{fig:multi_stage}\vspace{-15pt}
\end{figure*}

\begin{figure}[t]
    \centering
        \includegraphics[width=\linewidth]{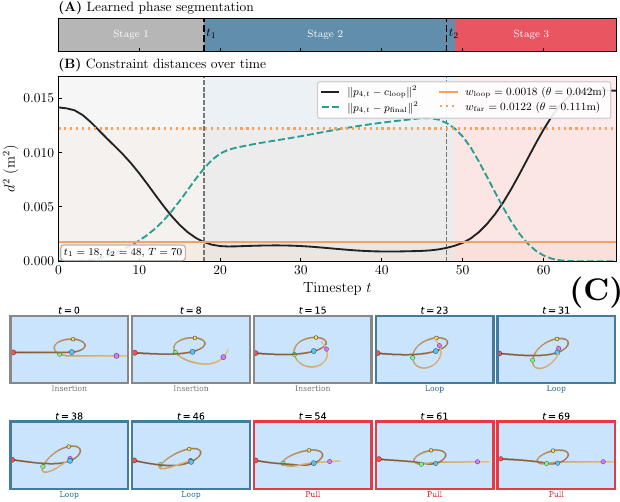}\vspace{-10pt}
        \caption{Knot tying: KCL jointly recovers (a) stage boundaries $t_1{=}18$, $t_2{=}48$
(black dashed) and (b) constraint thresholds $\theta_{\rm loop}{=}0.042$\,m
(orange solid), $\theta_{\rm far}{=}0.111$\,m (orange dotted) by solving
a single MILP, matching the ground-truth decomposition without manual
phase annotation. (c) Multi-stage decomposition of demonstration via learned constraints.\vspace{-22pt}}
    \label{fig:rope_knot_setup}
\end{figure}

\subsection{Multi-stage Human Manipulation}
% Additionally, we conduct two multi-stage manipulation experiments using human demonstrations to verify that KCL can learn constraints without requiring robot-generated demonstrations. 

% The first pouring experiment is divided into three manipulation stages: 
% 1) pick up the cup and move it above the container; 
% 2) pour the contents of the cup into the container; and 
% 3) place the cup on the other side of the container. 
% The cup and container are each represented by two keypoints and one centroid, denoted by 
% $p_{1,t}, p_{2,t}, c_{\mathrm{cup},t}$ and 
% $p_{3,t}, p_{4,t}, c_{\mathrm{container},t}$, respectively. 
% The stage-specific constraints are defined as follows. 
% In Stage 1, the cup should remain approximately upright:
% $g^{\mathrm{angle}}_1(P_t;\theta_{\mathrm{angle},1})
% :=
% \mathrm{ang}(p_{1,t}-p_{2,t},p_{1,0}-p_{2,0})
% -\theta_{\mathrm{angle},1}
% \leq 0.$
% In Stage 2, the cup should remain sufficiently close to the container in the $xy$-plane (the plane parallel to the desk):
% $g^{xy}_i(P_t;w_{\mathrm{xy},i})
% :=
% \|p_{i,t}^{xy}-c_{\mathrm{container},t}^{xy}\|_2^2
% -w_{\mathrm{xy},i}
% \leq 0.$
% In Stage 3, a similar upright-orientation constraint is imposed:
% $g^{\mathrm{angle}}_2(P_t;\theta_{\mathrm{angle},2})
% :=
% \mathrm{ang}(p_{1,t}-p_{2,t},p_{1,0}-p_{2,0})
% -\theta_{\mathrm{angle},2}
% \leq 0.$
Additionally, we conduct two multi-stage manipulation experiments using human demonstrations, showing that KCL can learn constraints without robot-generated data (Fig. \ref{fig:hammer}, \ref{fig:multi_stage}). 
Since control inputs are not directly available from the human videos, we model the demonstrated motion with a 3D single-integrator and reconstruct the equivalent input sequence.
The first pouring experiment has three stages: 1) move the cup above the container; 2) pour into the container; and 3) place the cup on the opposite side. The cup and container are represented by keypoints $p_{1,t},p_{2,t}$ and $p_{3,t},p_{4,t}$, with centroids $c_{\mathrm{cup},t}$ and $c_{\mathrm{container},t}$. \highvaluefix{Define
$\mathrm{ang}(a,b):=\arccos((a^\top b)/(\|a\|_2\|b\|_2))$
as the angle between nonzero vectors $a$ and $b$.}
In Stages 1 and 3, the cup is constrained to remain approximately upright:
$g^{\mathrm{angle}}_j(P_t;\theta_{\mathrm{angle},j})
:=
\mathrm{ang}(p_{1,t}-p_{2,t},p_{1,0}-p_{2,0})
-\theta_{\mathrm{angle},j}
\leq 0,\quad j\in\{1,2\}$. 
In Stage 2, the cup must remain close to the container in the desk-parallel $xy$-plane:
\highvaluefix{$g^{xy}_i(P_t;\rho_{\mathrm{xy},i})
:=
\|p_{i,t}^{xy}-c_{\mathrm{container},t}^{xy}\|_2^2
-\rho_{\mathrm{xy},i}
\leq 0$.}
An obstacle-avoidance constraint,
\highvaluefix{$g^{\mathrm{obs}}_i(P_t;\rho_{\mathrm{obs},i})
:=
-\|p_{i,t}-c_{\mathrm{container},t}\|_2^2
+\rho_{\mathrm{obs},i}
\leq 0$}
is also enforced for all three stages. 
The hammer experiment (Fig. \ref{fig:hammer}) has three stages: 1) move the hammer around an obstacle; 2) rotate its handle toward the shelf; and 3) place it on the shelf. The hammer, obstacle, and shelf are each represented by two keypoints and one centroid. In Stage 1, we impose an obstacle-avoidance constraint:
% The second hammer experiment is also divided into three manipulation stages:
% 1) pick up the hammer and move it around the obstacle; 
% 2) rotate the hammer until its handle is oriented toward the shelf; and 
% 3) place the hammer on the shelf. 
% The hammer, obstacle and shelf are each represented by two keypoints and one centroid. 
% The stage-specific constraints are defined as follows. 
% In Stage 1, an obstacle-avoidance constraint is imposed:
\highvaluefix{$g^{\mathrm{obs}}_i(P_t;\rho_{\mathrm{obs},i})
:=
-\|p_{i,t}-c_{\mathrm{obs},t}\|_2^2
+\rho_{\mathrm{obs},i}
\leq 0$.}
Stage 2 does not impose an additional learned constraint. 
\highvaluefix{Let $t_3:=t_2+1$ denote the first time index of Stage~3.}
In Stage 3, the hammer should maintain the desired orientation while remaining sufficiently close to the shelf:
\highvaluefix{$g^{\mathrm{angle}}_1(P_t;\theta_{\mathrm{angle},1})
:=
\mathrm{ang}(p_{1,t}-p_{2,t},p_{1,t_3}-p_{2,t_3})
-\theta_{\mathrm{angle},1}
\leq 0$,}
and
\highvaluefix{$g^{xy}_i(P_t;\rho_{\mathrm{xy},i})
:=
\|p_{i,t}^{xy}-c_{\mathrm{shelf},t}^{xy}\|_2^2
-\rho_{\mathrm{xy},i}
\leq 0$.} 
As for knot-tying, stage boundaries are co-solved with the constraint parameters in the MILP. The learned constraints are then used to plan and execute the tasks with a KUKA arm. Across five trials per task with randomized initial poses, all plans satisfy the learned constraints and reach each stage goal. Results are reported in Table~\ref{tab:numerical_results_summary}, with demonstrations and plan timelapses in \logicfix{Figs.~\ref{fig:hammer} and~\ref{fig:multi_stage}}. 
% As in the rope knot-tying experiment, stage boundaries are co-solved with the constraint parameters in the MILP. We then use the learned constraints to plan and execute the tasks with a KUKA arm. Across five trials per task with randomized initial poses, all plans satisfy the learned constraints and reach each stage goal. Quantitative results are reported in Table~\ref{tab:numerical_results_summary}, with demonstrations and planning timelapses in Fig.~\ref{fig:multi_stage}.
% As in the rope knot-tying experiment, the stage boundaries in these experiments are co-solved with the constraint parameters within the MILP. 
% After learning the constraints, task planning is performed in real world using a KUKA arm to execute the tasks. 
% We evaluate five planning trials with randomly sampled initial poses for each task, and all trials successfully satisfy the learned constraints and reach the goal in each stage. 
% The quantitative results are reported in Table~\ref{tab:numerical_results_summary}, while demonstration and planning timelapses are shown in Fig.~\ref{fig:multi_stage}.

\vspace{-5pt}
\subsection{Robustness Experiments}

% \looseness-1\noindent\textit{Keypoint Dropout Test:\quad}
% A challenge for keypoint-based methods is imperfect tracking: individual keypoints may be lost, and camera occlusion may temporarily hide relevant objects. We therefore evaluate KCL under keypoint dropout. First, we randomly remove $50\%$ of observations from the real-world Dubins-car demonstration (Fig. \ref{fig:dubins_results}d). Second, in the pouring demonstration (Fig.~\ref{fig:multi_stage}), we occlude the camera over a continuous interval, creating a contiguous gap. Missing keypoints are reconstructed using PCHIP interpolation~\cite{birkhoff1968piecewise}. We compare constraints learned from these reconstructed trajectories against those from the original trajectories. \logicfix{The componentwise learned-clearance errors are $(0.0000,0.0093)\,\mathrm{m}$ for random dropout and $(0.0000,0.0004)\,\mathrm{m}$ for continuous dropout, demonstrating KCL's robustness to missing observations.}

\looseness-1\noindent\textit{Keypoint Dropout Test:\quad}
A challenge for keypoint-based methods is imperfect tracking: individual keypoints may be lost, and camera occlusion may temporarily hide relevant objects. We test KCL by randomly dropping $50\%$ of observations in the real-world Dubins-car demonstration (Fig.~\ref{fig:dubins_results}d) and occluding a continuous interval in the pouring demonstration (Fig.~\ref{fig:multi_stage}). After reconstructing missing keypoints via PCHIP interpolation~\cite{birkhoff1968piecewise}, \logicfix{the componentwise learned-clearance errors relative to complete trajectories are $(0.0000,0.0093)\,\mathrm{m}$ and $(0.0000,0.0004)\,\mathrm{m}$, respectively, demonstrating robustness to random and continuous dropout.}

\begin{wraptable}{r}{0.38\columnwidth}
    \centering\vspace{-23pt}
    \caption{Constraint recovery noise robustness.\vspace{-8pt}%Robustness of constraint recovery to Gaussian keypoint noise
    %(mean $\pm$ stdev over 10 trials; $\sigma$ denotes per-coordinate noise stdev).
    }
    \label{tab:noise}
    \resizebox{\linewidth}{!}{%
    \begin{tabular}{ccc}
        \toprule
        \highvaluefix{\textbf{$\sigma_{\rm noise}$}} & \textbf{\# Trials} & \highvaluefix{\textbf{Clearance RMSE (m)}} \\
        \midrule
        0.00 & 10 & $0.0000 \pm 0.0000$ \\
        0.01 & 10 & $0.0146 \pm 0.0022$ \\
        0.05 & 10 & $0.0738 \pm 0.0078$ \\
        0.10 & 10 & $0.1220 \pm 0.0205$ \\
        0.15 & 10 & $0.1704 \pm 0.0423$ \\
        0.20 & 10 & $0.1912 \pm 0.0250$ \\
        \bottomrule
    \end{tabular}%
    }\vspace{-8pt}
\end{wraptable}

\noindent\textit{Keypoint Noise Test:\quad}
To evaluate robustness to other perception errors, we perform constraint recovery on the Dubins car under increasing Gaussian keypoint noise over 10 random seeds. \highvaluefix{Here, $\sigma_{\rm noise}$ denotes the per-coordinate noise standard deviation, and the reported RMSE is the error in the recovered clearance distance.} As shown in Tab.~\ref{tab:noise}, recovery error shows graceful degradation rather than abrupt sensitivity to small perception errors. We also observed no active constraint flipping due to the noise.
% To test the robustness of KCL againt other sources of detection noise, constraint recovery experiments are performed
% under increasing Gaussian keypoint noise for Dubins car over 10 random seeds. As shown in Fig~\ref{fig:noise_curve}, the recovered constraint parameters degrade smoothly as the noise magnitude increases, rather than exhibiting abrupt failure under small perturbations. The constraint-recovery RMSE grows approximately proportionally with the injected noise over the tested range, indicating graceful degradation rather than strong sensitivity to small perception errors.
\begin{wraptable}{r}{0.27\columnwidth}
\vspace{-15pt}
\centering
\caption{Solve times (10 trials).\vspace{-6pt}}
\label{tab:solve_time_scaling}
\setlength{\tabcolsep}{2pt}
\renewcommand{\arraystretch}{0.4}
\scriptsize
\resizebox{\linewidth}{!}{%
\begin{tabular}{lccc}
\toprule
Task & $N$ & $M$ & Solve time(s) \\
\midrule
Nav.      & 10 & 5  & $0.56\!\pm\!.01$ \\
Nav.      & 20 & 10 & $0.75\!\pm\!.00$ \\
Nav.      & 50 & 50 & $2.05\!\pm\!.05$ \\
% \midrule
Rigid     & 8  & 4  & $0.07\!\pm\!.00$ \\
Rigid     & 16 & 8  & $0.11\!\pm\!.00$ \\
Rigid     & 50 & 50 & $0.37\!\pm\!.04$ \\
% \midrule
Rope & 5  & 2  & $0.03\!\pm\!.01$ \\
Rope & 10 & 4  & $0.04\!\pm\!.00$ \\
Rope & 50 & 50 & $1.23\!\pm\!.01$ \\
% \midrule
Knot      & 5  & 1  & $0.10\!\pm\!.00$ \\
Knot      & 10 & 1  & $0.21\!\pm\!.00$ \\
Knot      & 15 & 1  & $0.25\!\pm\!.00$ \\
\bottomrule
\end{tabular}%
}\vspace{-10pt}
\end{wraptable}

\noindent\textit{Solve Time Scaling:\quad} KCL uses mixed-integer programming, whose worst-case computational cost can grow exponentially with the number of integer variables. To assess its practical scalability, we measure solve times as the numbers of keypoints ($N$) and centroids ($M$) increase. As shown in Table~\ref{tab:solve_time_scaling}, mean solve times remain at or below $2.05$~s across all tested configurations, including those with $N=M=50$. These results indicate that KCL is scalable over the evaluated range.

% \endgroup
%===============================================================================
\vspace{-5pt}
\section{Conclusion}
\label{sec:conclusion}

We present KCL, a framework for learning task-relevant constraints from visual demonstrations for safe planning. KCL extracts keypoints and mask centroids from RGB-D demonstrations, learns keypoint-space dynamics, and infers geometric constraints via approximate KKT optimality in the reduced visual state space. The learned constraints are interpretable, directly enforceable during planning, and robustified using margins from dynamics error. Across navigation, rigid manipulation, and deformable manipulation tasks, KCL recovers meaningful constraints and produces safe goal-reaching plans in simulation and hardware. These results show that sparse visual keypoints can make constrained inverse optimal control practical for visuomotor control.

\vspace{-3pt}
\bibliographystyle{IEEEtran}
\bibliography{reference}

\clearpage
\appendix
\appendices

\section{Proofs}\label{app:proofs}

% \clearpage
\logicfix{For convenience, we restate Theorem~\ref{thm:error_bound} before the proof.}

\noindent\textbf{Theorem~\ref{thm:error_bound} (restated).}\ \begingroup\itshape
Consider the inverse problem \eqref{eq:keypoint_inverse_problem_relaxed}
\notationfix{on ideal reduced trajectories $\mathcal Z=\{z^q\}_{q=1}^D$ and perceived
approximations $\hat{\mathcal Z}=\{\hat z^q\}_{q=1}^D$, following
\eqref{eq:keypoint_demo}. Assume the constraint parameters are shared across time within the stage under consideration and
$\theta_J$ is known.} Suppose the perceived keypoints and centroids satisfy
$\|\hat p_{i,t}^q-p_{i,t}^q\|_2\le \epsilon_k$ and
$\|\hat c_{j,t}^q-c_{j,t}^q\|_2\le \epsilon_c$ for all $q,t,i,j$, and define
$r_d:=\epsilon_k+\epsilon_c$. Let $\rho\in\mathbb R^{n_d}$ and
$\sigma\in\mathbb R^{n_o}$ denote the learned distance-margin and
orientation-threshold parameters in
\eqref{eq:keypoint_mask_distance_constraint} and
\eqref{eq:keypoint_orientation_constraint}, respectively. Assume all
orientation-defining keypoint pairs satisfy
$\|p_{a,t}^q-p_{b,t}^q\|_2\ge \ell_{\min}>2\epsilon_k$, and define
$r_o:=2\epsilon_k/(\ell_{\min}-2\epsilon_k)$.

Let $(\rho^\star,\sigma^\star)$, $(\tilde\rho,\tilde\sigma)$, and
$(\hat\rho,\hat\sigma)$ denote the recovered parameters using, respectively:
exact perception and exact dynamics Jacobians; perturbed perception and exact
Jacobians at the perceived trajectory; and perturbed perception with learned
Jacobians. Assume these three problems share the same reduced active set and
that each active geometric parameter remains pinned by the same feature, i.e.,
$\rho_i^\star=\|p_{a_i,t_i}^{q_i}-c_{b_i,t_i}^{q_i}\|_2^2$,
$\tilde\rho_i=\|\hat p_{a_i,t_i}^{q_i}-\hat c_{b_i,t_i}^{q_i}\|_2^2$,
$\sigma_j^\star=\phi_j(P_{t_j}^{q_j})$, and
$\tilde\sigma_j=\phi_j(\hat P_{t_j}^{q_j})$.

\notationfix{Let $f^\star$ denote the exact keypoint-space dynamics map, and define
$\tilde A_t^q:=\frac{\partial f^\star}{\partial P}(\hat P_t^q,u_t^q)$,
$\tilde B_t^q:=\frac{\partial f^\star}{\partial u}(\hat P_t^q,u_t^q)$,
$\hat A_t^q:=\frac{\partial f_\psi}{\partial P}(\hat P_t^q,u_t^q)$, and
$\hat B_t^q:=\frac{\partial f_\psi}{\partial u}(\hat P_t^q,u_t^q)$.
Suppose the learned Jacobians satisfy
$\left\|\begin{bmatrix}
\hat A_t^q-\tilde A_t^q & \hat B_t^q-\tilde B_t^q
\end{bmatrix}\right\|_2\le \epsilon_f$ for all $q,t$.}
\highvaluefix{Let $F(y;\zeta)=0$, with
$y=(\rho,\sigma,\lambda,\nu,\ldots)$, denote the reduced active-set inverse-KKT
system. For the data $\tilde\zeta$ with perturbed perception and exact Jacobians,
define $\mathcal U_{\rm jac}$ as all data with the same perceived trajectory as
$\tilde\zeta$ and Jacobian perturbations satisfying
$\max_{q,t}\left\|\begin{bmatrix}
A_t^q-\tilde A_t^q & B_t^q-\tilde B_t^q
\end{bmatrix}\right\|_2\le\epsilon_f$. Assume that, for every
$\zeta\in\mathcal U_{\rm jac}$, the reduced KKT system has a locally unique solution
with the same active set, that
$\bar\nu:=\sup_{\zeta\in\mathcal U_{\rm jac}}\|\nu(\zeta)\|_2<\infty$, and that there
exist $\kappa_\rho,\kappa_\sigma>0$ such that
$\|\rho(\zeta)-\rho(\tilde\zeta)\|_2
\le
\kappa_\rho
\|F(y(\tilde\zeta);\zeta)-F(y(\tilde\zeta);\tilde\zeta)\|_2$
and
$\|\sigma(\zeta)-\sigma(\tilde\zeta)\|_2
\le
\kappa_\sigma
\|F(y(\tilde\zeta);\zeta)-F(y(\tilde\zeta);\tilde\zeta)\|_2$.}
Finally, let
$\tilde d_{\max}:=\max_i
\|\hat p_{a_i,t_i}^{q_i}-\hat c_{b_i,t_i}^{q_i}\|_2$. Then
\begin{equation}
    \|\hat\rho-\rho^\star\|_2
    \le
    \sqrt{n_d}\left(2\tilde d_{\max}r_d+r_d^2\right)
    +
    \kappa_\rho\epsilon_f\bar\nu,
    \vspace{-11pt}
\end{equation}

\begin{equation}
    \|\hat\sigma-\sigma^\star\|_2
    \le
    \sqrt{n_o}\,r_o
    +
    \kappa_\sigma\epsilon_f\bar\nu .
\end{equation}

\endgroup

\begin{proof}
We first bound the perception-induced error in the active distance
features. For any active keypoint--centroid pair, let
$d=\|p-c\|_2$ and $\hat d=\|\hat p-\hat c\|_2$. By the reverse triangle
inequality,
\[
    |\hat d-d|
    \le
    \|(\hat p-\hat c)-(p-c)\|_2
    \le
    \|\hat p-p\|_2+\|\hat c-c\|_2
    \le r_d .
\]
Therefore
\[
    |\hat d^2-d^2|
    =
    |\hat d-d|(\hat d+d)
    \le
    r_d(2\hat d+r_d).
\]
Since each distance parameter is pinned by the same active feature before
and after perception perturbation,
\[
    |\tilde\rho_i-\rho_i^\star|
    \le
    \logicfix{2\tilde d_{\max}r_d+r_d^2}.
\]
Thus
\[
    \|\tilde\rho-\rho^\star\|_2
    \le
    \sqrt{n_d}(2\tilde d_{\max}r_d+r_d^2).
\]

Next, consider an active orientation feature. Let
$v=p_a-p_b$ and $\hat v=\hat p_a-\hat p_b$. Then
$\|\hat v-v\|_2\le 2\epsilon_k$. Along the segment
$v_s=v+s(\hat v-v)$, $s\in[0,1]$, we have
$\|v_s\|_2\ge \ell_{\min}-2\epsilon_k$. For
\[
    \phi(v)=\left(\frac{v^\top\hat e}{\|v\|_2}\right)^2,
\]
a direct calculation gives
\[
    \|\nabla_v\phi(v)\|_2
    =
    \frac{2}{\|v\|_2}
    |\alpha(v)|\sqrt{1-\alpha(v)^2}
    \le
    \frac{1}{\|v\|_2},
    \qquad
    \alpha(v):=\frac{v^\top\hat e}{\|v\|_2}.
\]
Hence, by the mean value theorem,
\[
    |\phi(\hat v)-\phi(v)|
    \le
    \frac{\|\hat v-v\|_2}{\ell_{\min}-2\epsilon_k}
    \le
    \frac{2\epsilon_k}{\ell_{\min}-2\epsilon_k}
    =
    r_o.
\]
Since each orientation threshold is pinned by the same active feature,
\[
    \|\tilde\sigma-\sigma^\star\|_2
    \le
    \sqrt{n_o}r_o.
\]

It remains to bound the additional effect of using learned dynamics
Jacobians. \highvaluefix{For the learned-Jacobian data $\hat\zeta\in\mathcal U_{\rm jac}$,
define
\[
    \Delta F_{\rm dyn}
    :=
    F(y(\tilde\zeta);\hat\zeta)-F(y(\tilde\zeta);\tilde\zeta).
\]}
By the assumed local KKT sensitivity,
\[
    \|\hat\rho-\tilde\rho\|_2
    \le
    \kappa_\rho\|\Delta F_{\rm dyn}\|_2,
    \qquad
    \|\hat\sigma-\tilde\sigma\|_2
    \le
    \kappa_\sigma\|\Delta F_{\rm dyn}\|_2.
\]
The dynamics contribution to stationarity contains blocks
$-(\tilde A_t^q)^\top\nu_t^q$ and
$-(\tilde B_t^q)^\top\nu_t^q$. Replacing
$(\tilde A_t^q,\tilde B_t^q)$ with $(\hat A_t^q,\hat B_t^q)$ therefore
perturbs the stacked stationarity residual by at most
\[
    \|\Delta F_{\rm dyn}\|_2
    \le
    \highvaluefix{\epsilon_f\|\nu(\tilde\zeta)\|_2}
    \le
    \epsilon_f\bar\nu,
\]
\highvaluefix{where the last inequality follows from $\tilde\zeta\in\mathcal U_{\rm jac}$.}
Thus
\[
    \|\hat\rho-\tilde\rho\|_2\le \kappa_\rho\epsilon_f\bar\nu,
    \qquad
    \|\hat\sigma-\tilde\sigma\|_2\le \kappa_\sigma\epsilon_f\bar\nu.
\]
The claimed bounds follow by the triangle inequality:
\[
    \|\hat\rho-\rho^\star\|_2
    \le
    \|\tilde\rho-\rho^\star\|_2+\|\hat\rho-\tilde\rho\|_2,
\]
and
\[
    \|\hat\sigma-\sigma^\star\|_2
    \le
    \|\tilde\sigma-\sigma^\star\|_2+\|\hat\sigma-\tilde\sigma\|_2.
\]
\end{proof}

\section{Constraint Pruning}\label{app:pruning}

\notationfix{The inverse problem~\eqref{eq:keypoint_inverse_problem_relaxed} is formulated
over distance candidates $(a_i,b_i)$, $i=1,\ldots,n_d$, and orientation
candidates $(a_j,b_j,\hat e_j)$, $j=1,\ldots,n_o$, which can be numerous when
$N$ and $M$ are large. After solving, most candidates are inactive: their
Lagrange multipliers are zero and the complementary-slackness binaries
$\delta^{q,\mathrm{dist}}_{i,t}$ and $\delta^{q,\mathrm{ori}}_{j,t}$
\cite{ral20} remain zero across all demonstrations $q$ and timesteps $t$.
To simplify downstream planning, we retain a distance candidate $i$ (or
orientation candidate $j$) only if
$\delta^{q,\mathrm{dist}}_{i,t}=1$ (respectively,
$\delta^{q,\mathrm{ori}}_{j,t}=1$) for at least one $(q,t)$, indicating that
the constraint is tight along at least one demonstrated trajectory.}

A binary being active in the MILP solution does not by itself certify that
the corresponding constraint is physically meaningful --- it may arise from
numerical degeneracy or be trivially satisfied everywhere.
We therefore apply a sensitivity check to each retained candidate.
\logicfix{For a candidate with learned parameter $\theta^\star$, we re-solve the relaxed MILP~\eqref{eq:keypoint_inverse_problem_relaxed} after perturbing that parameter by a small $\delta>0$ in the constraint-loosening direction. For the distance and orientation parameterizations in \eqref{eq:keypoint_mask_distance_constraint}--\eqref{eq:keypoint_orientation_constraint}, this corresponds to imposing $\rho_i\le\rho_i^\star-\delta$ or $\sigma_j\le\sigma_j^\star-\delta$, respectively. We record the resulting change in stationarity residual $\Delta S$. A large increase ($\Delta S\gg0$) indicates that the candidate is required to explain the demonstration's local optimality (see \cite{ral20}); if $\Delta S\approx0$, it contributes negligibly and is discarded.}
\logicfix{The most representative candidates that pass both the activity and sensitivity checks are retained for downstream planning.}

\section{Additional Results}
\looseness-1\noindent\textit{Ground-truth CIOC baseline.\quad}
To provide a hard-constraint
\begin{wrapfigure}{r}{0.4\linewidth}
    \centering
    \vspace{-12pt}
    \includegraphics[width=\linewidth]{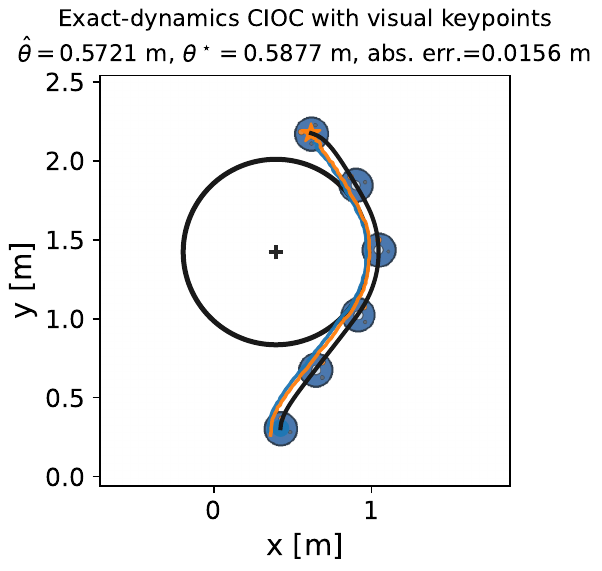}
    \vspace{-15pt}
    \caption{Ground-truth CIOC demonstration.}
    \label{fig:CIOC}
    \vspace{-10pt}
\end{wrapfigure}
CIOC upper bound while retaining the visual representation used by KCL, we rendered an optimal Dubins-car demonstration and extracted keypoints using the same visual keypoint extraction pipeline. We retain the exact analytic Dubins dynamics and assume that the tracked robot keypoints are rigidly attached to the vehicle. The relaxed CIOC solve recovers a clearance of $\hat{\theta}=0.5721\,\mathrm{m}$ from the ground-truth $\theta^\star=0.5877\,\mathrm{m}$, corresponding to an absolute error of $0.0156\,\mathrm{m}$ ($2.66\%$). The total relaxed stationarity residual is $1.18\times10^{-2}$. This result shows that a hard-constrained CIOC formulation with exact dynamics still recovers the constraint accurately when operating on visually extracted keypoints.

\noindent\textit{Nonconvex Constraints.\quad}
While built for distance/
\begin{wrapfigure}{r}{0.4\linewidth}
    \centering
    \vspace{-12pt}
    \includegraphics[width=\linewidth]{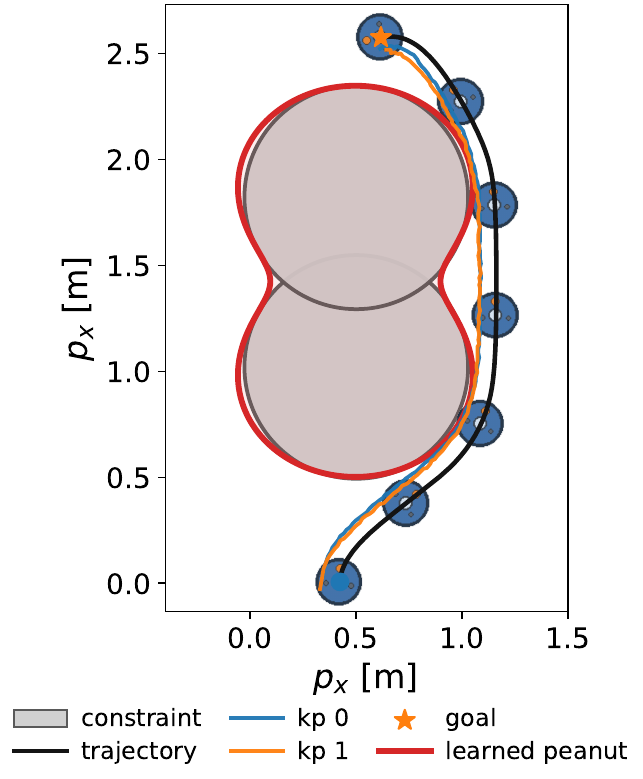}
    \vspace{-15pt}
    \caption{Peanut constraint with Dubins car dynamics.}
    \label{fig:peanut}
    \vspace{-10pt}
\end{wrapfigure}
 \looseness-1  orientation constraints, KCL can learn nonconvex safe or unsafe sets, e.g., we learn a red non-convex unsafe set 
$g(P_t;a,b,R)=$ $ -[ (p_{x,t}^2+p_{y,t}^2-R^2)^2 + aR^2 p_{x,t}^2 - bR^4 ]$ $\leq 0$ to approximate the union of two true circle obstacles. Given one locally-optimal demo avoiding the circles under Dubins dynamics, KCL learns the parameters $\hat a = 2.083, \hat b = 1.1 , \hat{R}=0.645\,\mathrm{m}$ \logicfix{with false-positive and false-negative rates of} $1.11\%$ and $0.20\%$, respectively, relative to the true constraint evaluated over $(p_x,p_y) \in [-1,1.5]\times[0,2.5]$. Similarly, if the true constraint were $g$ and its parameterization unknown to the learner, KCL could instead learn it approximately as a union of two circular distance constraints.

\begin{figure}[t]
        \centering
        \includegraphics[width=\linewidth]{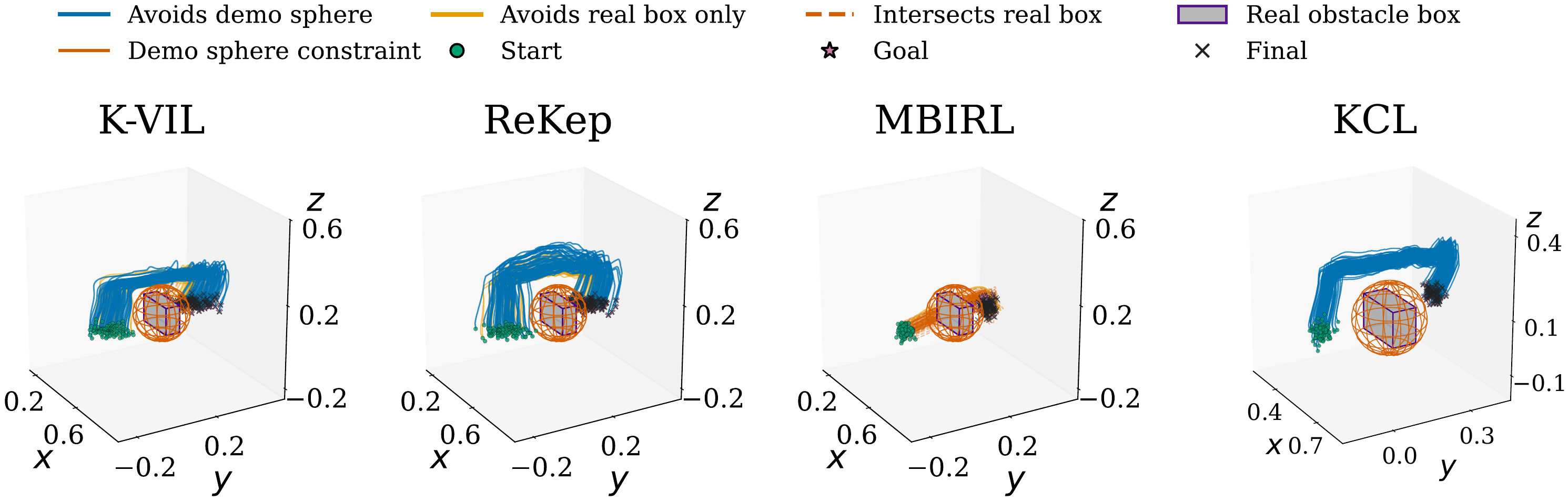}
        \caption{Manipulation baseline comparison. Planning results. For navigation and manipulation planning, blue trajectories satisfy the demonstration-inferred clearance constraint, yellow trajectories avoid the ground-truth obstacle but violate the demonstration-inferred clearance, and dashed orange trajectories collide with the ground-truth obstacle.
    (a) Navigation baseline comparison. 
    (b) Manipulation baseline comparison.}
        \label{fig:app_manipulation_baseline}
    \end{figure}

\begin{figure}
    \centering
    \includegraphics[width=0.98\linewidth]{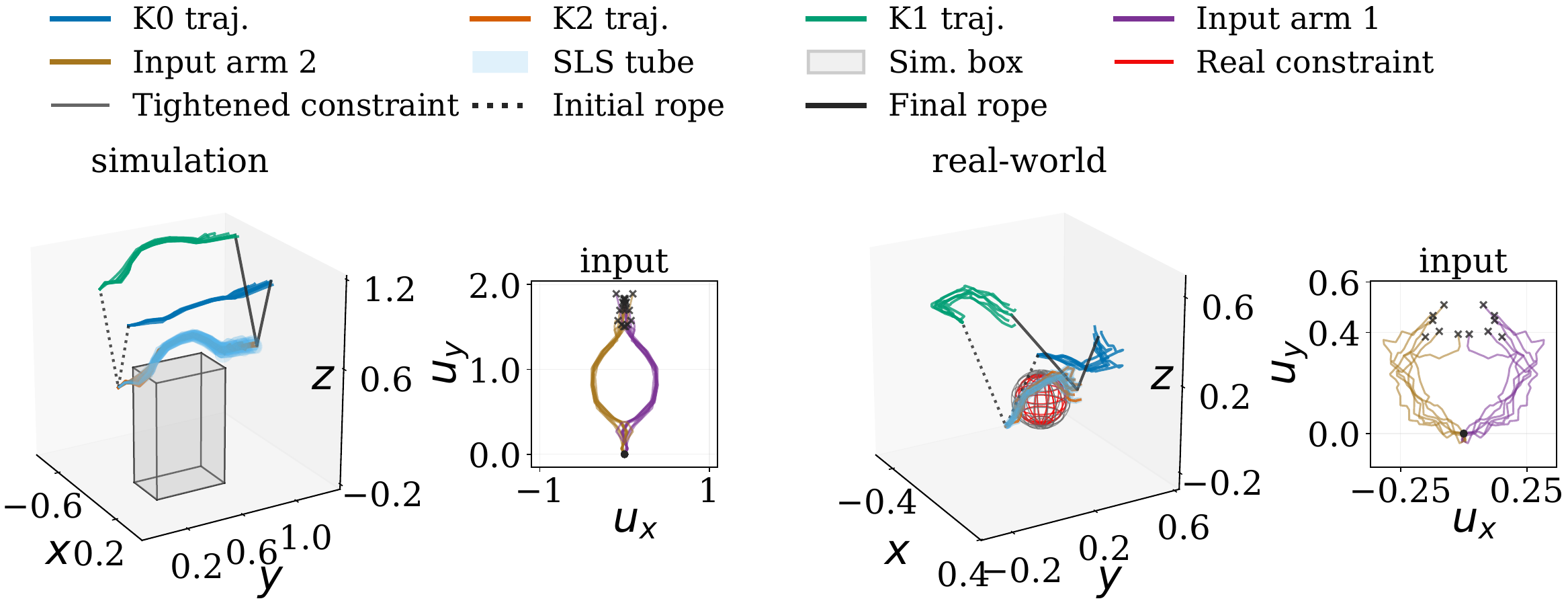}
    \caption{Rope planning with the learned keypoint constraint. Three trajectories denote rope keypoint trajectories; the light-blue tube is the SLS safety tube around the constrained keypoint.}
    \label{fig:app_rope_planning}
\end{figure}

% \vspace{-5pt}
\subsection{Baselines and Planning}

\logicfix{The baseline comparison and rope-planning visualization are shown in Figs.~\ref{fig:app_manipulation_baseline} and~\ref{fig:app_rope_planning}, respectively.} The curves in the navigation and manipulation panels show the centroid trajectory of the manipulated object for visualization clarity. Therefore, some trajectories that appear visually separated from the obstacle can still be unsafe: one or more object keypoints may enter the obstacle box or the demonstration constraint even when the centroid remains outside. In the rope planning real panel, the light-blue tube around the constrained rope keypoint is the SLS safety tube, which bounds the possible tracking deviation induced by accumulated keypoint-dynamics error. The gray wireframe sphere is the tightened constraint, obtained by enlarging the nominal clearance radius by the SLS tube radius. Hence, satisfying the tightened constraint means that the nominal plan leaves enough margin so that the disturbed closed-loop trajectory remains outside the original demonstration-followed constraint under the calibrated dynamics-error bound.

\end{document}